\documentclass{article}

\PassOptionsToPackage{numbers,compress}{natbib}
\usepackage[preprint]{neurips_2025}

\usepackage{wrapfig}
\usepackage{microtype}
\usepackage{graphicx}
\usepackage{subcaption}
\usepackage{booktabs} 
\usepackage{tcolorbox}
\usepackage{hyperref}

\usepackage{amsmath}
\usepackage{amssymb}
\usepackage{mathtools}
\usepackage{amsthm}
\usepackage{algorithm}
\usepackage{algorithmic}
\usepackage{multirow}
\usepackage{booktabs}
\usepackage{colortbl} 
\usepackage{xcolor}
\usepackage{graphicx} 

\definecolor{arrowred}{RGB}{215, 60, 60} 
\definecolor{arrowblue}{RGB}{65, 105, 225}

\newcommand{\inc}[1]{\textcolor{arrowred}{\scriptsize~(\raisebox{1pt}{$\uparrow$}{#1})}}

\newcommand{\dec}[1]{\textcolor{arrowblue}{\scriptsize~(\raisebox{1pt}{$\downarrow$}{#1})}}

\usepackage[capitalize,noabbrev]{cleveref}

\theoremstyle{plain}
\newtheorem{theorem}{Theorem}[section]
\newtheorem{proposition}[theorem]{Proposition}
\newtheorem{lemma}[theorem]{Lemma}

\theoremstyle{definition}
\newtheorem{definition}[theorem]{Definition}
\newtheorem{assumption}[theorem]{Assumption}
\theoremstyle{remark}

\usepackage{bm} 

\definecolor{myblue}{RGB}{0,0,0}
\definecolor{myred}{RGB}{0,0,0}
\newcommand{\fix}[1]{\textcolor{myred}{#1}}

\definecolor{bggray}{RGB}{242, 242, 242}
\definecolor{maingray}{gray}{0.90}
\definecolor{lightblue}{RGB}{232, 242, 255}

\usepackage[textsize=tiny]{todonotes}
\usepackage{enumitem}

\title{Breaking the Martingale Curse: Multi-Agent Debate via Asymmetric Cognitive Potential Energy}

\author{
  Yuhan Liu\textsuperscript{1}\thanks{Equal contribution. The order was decided by a coin flip.},\quad
  Juntian Zhang\textsuperscript{2}\footnotemark[1],\\
  \textbf{Yichen Wu}\textsuperscript{3},\quad
  \textbf{Martin Takáč}\textsuperscript{1},\quad
  \textbf{Salem Lahlou}\textsuperscript{1},\quad
  \textbf{Xiuying Chen}\textsuperscript{1},\quad
  \textbf{Nils Lukas}\textsuperscript{1}\\
  \textsuperscript{1}Mohamed bin Zayed University of Artificial Intelligence \\ 
  \textsuperscript{2}Gaoling School of Artificial Intelligence, Renmin University of China \quad
  \textsuperscript{3}Harvard University\\
  \texttt{yuhan.liu@mbzuai.ac.ae} \quad
  \texttt{zhangjuntian@ruc.edu.cn}
}

\begin{document}

\maketitle

\begin{abstract}

Multi-Agent Debate (MAD) has emerged as a promising paradigm for enhancing large language model reasoning. However, recent work reveals a limitation:standard MAD cannot improve belief correctness beyond majority voting; we refer to this as the \textit{Martingale Curse}. This \textit{curse} arises because correlated errors cause agents to converge toward erroneous consensus, where debate merely reinforces collective mistakes rather than filtering noise.
We propose \textbf{AceMAD}, a framework that breaks the Martingale Curse by harnessing asymmetric cognitive potential energy to transform MAD from a random walk into a directed convergence process with positive drift. 
Through a peer-prediction mechanism, agents predict their peers' belief distributions, revealing asymmetric cognitive potential: truth-holders not only know the correct answer but also anticipate the crowd's misconceptions, while the hallucinating majority remains blind to their collective error. This asymmetry creates a potential energy gap that we quantify via strictly proper scoring rules. We prove this cognitive potential manifests as information-theoretic superiority and, under nonlinear aggregation, converts into submartingale drift toward truth, directly breaking the Martingale Curse. Experiments on challenging subsets across six benchmarks show AceMAD recovers sparse truth signals even when initial majorities are incorrect, substantially outperforming baseline methods.

\end{abstract}

\section{Introduction}
The advanced reasoning abilities of large language models (LLMs) have made collaborative reasoning via Multi-Agent Debate (MAD) a promising paradigm for solving complex tasks~\cite{li2025hiddenbench} including code generation~\cite{zhang2025thinking}, misinformation detection~\cite{liu2025truth}, mathematical problem solving~\cite{lei2024macm,chen2024comm} and visual QA~\cite{zhang2025weaving}.
The premise is intuitive: multiple agents exchange arguments iteratively, filtering noise through mutual critique to converge toward correct answers ~\cite{wangself,du2023improving}. However, recent work~\cite{choi2025debate} analysis reveals a fundamental barrier: without external supervision, standard MAD operates as a martingale process where expected belief correctness remains constant across debate rounds, reducing to majority voting in expectation. We refer to this as the \textit{\textbf{Martingale Curse}}.

While this characterization holds when agent errors are independent, we identify a critical oversight: in challenging reasoning tasks, LLMs exhibit correlated errors, systematic biases toward the same logical traps and misconceptions. Figure~\ref{fig:intro} illustrates this phenomenon: when the majority converges on a shared hallucination, for example choosing ``D: the ham" due to phonetic similarity, standard MAD amplifies rather than corrects the error. Our experiments in \S\ref{sec:empirical_analysis} (Table~\ref{tab:all_methods_comparison_n5}) confirm this empirically: on challenging subsets where initial majorities are wrong, majority voting achieves only 14.0\% accuracy, and standard MAD improves marginally to 22.1\%, far below what purely collaborative reasoning should achieve. This correlated noise regime is precisely where debate should prove indispensable, yet existing methods fail.

\begin{figure}[t]
    \centering
    \includegraphics[width=0.8\linewidth]{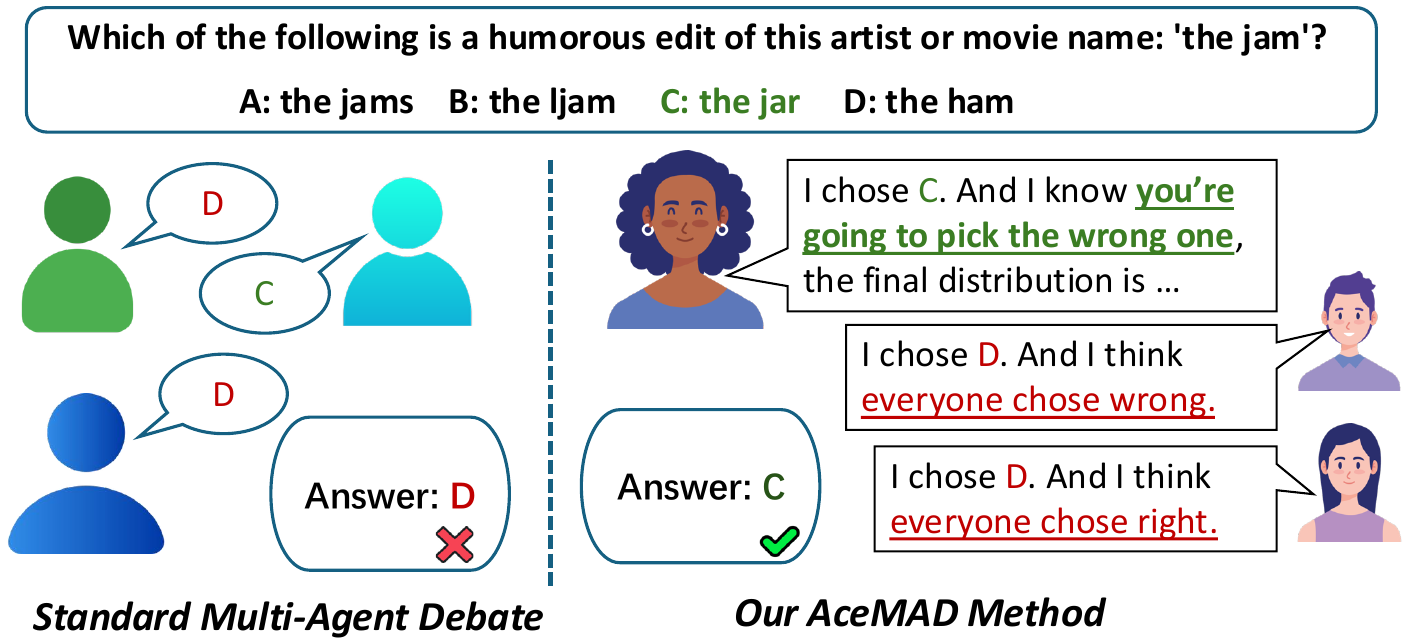}
    \caption{\textbf{Breaking Martingale Curse via Peer Prediction.} \textbf{(Left)} Standard MAD fails as the majority converges on a common misconception (``D"), drowning out the truth. \textbf{(Right)} \textit{AceMAD} recovers the truth (``C") by incentivizing agents to predict peer belief. The truth-holder is identified by correctly anticipating the crowd's behavior, allowing the system to filter out correlated noise.}
    \label{fig:intro}
\end{figure}
\textbf{Why does the \textit{Martingale Curse} persist?} Standard MAD treats all arguments as ``cheap talk"~\cite{farrell1996cheap}, updating beliefs through symmetric linear aggregation. Under correlated errors, this creates an echo chamber: the hallucinating majority reinforces each other's misconceptions, drowning isolated truth-holders in collective consensus. To break the curse, we need a mechanism that distinguishes truth-holders from the crowd without external labels.

We argue that \textit{truth-holders and the hallucinating majority possess \textbf{asymmetric cognitive potential energy}}. This asymmetry manifests in their second-order beliefs ~\cite{geanakoplos1992common}: (1) The majority, suffering from false consensus effect~\cite{ross1977false}, assumes everyone shares their view and is ``surprised" when contradicted; (2) truth-holders can anticipate the specific errors the majority will commit~\cite{nickerson1999we,eyster2010naive}. This cognitive gap between merely holding truth and understanding collective error creates a \textit{potential energy} difference that standard MAD fails to exploit.

To address this challenge, We propose the \textbf{AceMAD} (\textbf{A}symmetric \textbf{C}ognitive potential \textbf{E}nergy for \textbf{M}ulti-\textbf{A}gent \textbf{D}ebate), which converts this latent potential into directed drift toward truth. Through a peer-prediction mechanism~\cite{miller2005eliciting}, agents privately predict their peers' belief distributions before revealing arguments. We employ the Brier score, a strictly proper scoring rule to quantify prediction accuracy. Critically, truth-holders who correctly anticipate the crowd's hallucination achieve higher scores, while the majority incurs penalties for failed predictions. We leverage these scores through nonlinear weight amplification, transforming the zero-drift martingale into a submartingale with positive drift (Theorem~\ref{thm:submartingale}). This process elevates truth-holders' influence until they dominate the aggregate belief, when starting as the minority.

Our contributions are threefold as follows:

$\bullet$ \textit{\textbf{Algorithmic Protocol}}: We propose the \textbf{AceMAD} protocol, leveraging peer prediction and proper scoring to identify and amplify sparse truth signals in \fix{challenging interval}, requiring no external supervision.
    
$\bullet$ \textit{\textbf{Theoretical Analysis}}: We prove AceMAD achieves Blackwell dominance over standard MAD (Theorem~\ref{thm:blackwell}) by capturing second-order cognition, and demonstrate \textbf{how asymmetric cognitive potential energy converts into submartingale drift}, guaranteeing convergence to truth from minority positions (Theorem~\ref{thm:submartingale}).

$\bullet$ \textit{\textbf{Empirical Validation}}:  On challenging subsets across six benchmarks, AceMAD achieves 20.31\% over standard MAD. Ablations study confirm peer prediction is essential: removing second-order cognition collapses performance by 14.6\%, validating that cognitive potential energy, not first-order confidence, breaks the Martingale Curse.

\section{Preliminaries \fix{and Problem Formulation}}

In this section, we formalize  Standard Multi-Agent Debate, define the challenging interval, and present the formulation of ``Debate as a martingale process".

\subsection{Standard Multi-Agent Debate}

Consider a task with an input query $x \in \mathcal{X}$ and a discrete label space $\mathcal{Y}$. Let $\mathcal{A} = \{A_1, \dots, A_N\}$ be a set of $N$ LLM agents.
At any time step $t \ge 0$, each agent $A_i$ holds a private belief state, modeled as a probability distribution over the label space, denoted by $p_i^{(t)} \in \Delta(\mathcal{Y})$.

\paragraph{The Debate Protocol.}
The standard simultaneous debate protocol proceeds in rounds $t=1, \dots, T$. In each round, every agent $A_i$ first generates a natural language argument $m_i^{(t)}$ conditioned on the query $x$ and the accumulated conversation history $H^{(t-1)} = \{m_j^{(\tau)} : j \in [N], \tau < t\}$. Subsequently, agents update their private beliefs by incorporating the newly generated peer arguments. This update process is typically modeled as a linear combination of the agent's current belief and the visible peer beliefs, governed by the rule:
\begin{equation}
\label{rule}
    p_i^{(t+1)} = (1-\alpha) p_i^{(t)} + \alpha \sum_{j \neq i} \omega_{ij} p_j^{(t)}
\end{equation}
where $\alpha$ denotes the susceptibility to peer influence, and $\omega_{ij}$ represents the influence weight of agent $j$ on $i$. In standard settings, these weights are usually uniform ($\omega_{ij} = \frac{1}{N-1}$), treating all agents as equally credible sources. Finally, the system decision is derived from the aggregated belief $\bar{p}^{(T)} = \frac{1}{N} \sum_i p_i^{(T)}$ upon the conclusion of the debate.

\subsection{The Challenging Interval}

Standard aggregation methods such as majority voting rely on the \textit{Condorcet Jury Theorem}, which assumes agent errors are independent~\cite{Ladha1992TheCJ}. However, we argue that this assumption is violated in ``challenging'' reasoning tasks. We formally define the \textbf{Challenging Interval} as follows:

\begin{definition}[\textbf{Challenging Interval}]
\label{def:correlated_noise}
Let $y^* \in \mathcal{Y}$ denote the ground truth. The Challenging Interval is characterized by a collective failure pattern within a majority subset of agents $\mathcal{C} \subset \mathcal{A}$ (referred to as the ``Crowd'', where $|\mathcal{C}| > N/2$).
In this interval, \fix{the crowd exhibits systematic bias}, where the mode of the initial belief distribution $p_i^{(0)}$ converges to an incorrect answer $\hat{y} \neq y^*$, satisfying:
\begin{equation}
    \mathbb{P}(\arg\max p_i^{(0)} = \hat{y}) > 0.5, \quad \forall i \in \mathcal{C}.
\end{equation}
Furthermore, the errors among agents are positively correlated; that is, the likelihood of an agent $j$ failing increases conditioned on the failure of a peer $i$, formally defined as: 
\begin{equation}
        \mathbb{P}(\text{Error}_j \mid \text{Error}_i) > \mathbb{P}(\text{Error}_j), \quad \forall i, j \in \mathcal{C}.
    \end{equation}
Functionally, this definition captures scenarios dominated by ``common misconceptions,'' where agents do not make random errors but instead hallucinate consistently toward the same distractor.
\end{definition}

\subsection{Debate as a Martingale Process}

Recent work \citep{choi2025debate} characterizes the belief evolution in MAD as a martingale. We restate this property under our notation to motivate our proposed method.

\begin{proposition}[Martingale Property of Standard MAD]
\label{prop:martingale}
In a closed debate system without external oracle signals, if the belief update rule is linear (as in Eq.(\ref{rule})) and symmetric, the sequence of the average belief in the ground truth, denoted as $\mu_t = \bar{p}^{(t)}(y^*)$, forms a \textbf{martingale}:
\begin{equation}
    \mathbb{E}[\mu_{t+1} \mid H^{(t)}] = \mu_t.
\end{equation}
\end{proposition}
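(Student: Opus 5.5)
The plan is to make the statement precise and then verify it by a direct computation on the update rule Eq.~(\ref{rule}). The word ``symmetric'' I will read in its natural sense: every agent uses the same susceptibility $\alpha$, and the influence matrix $\Omega=(\omega_{ij})$ with $\omega_{ii}=0$ is doubly stochastic. That is, rows sum to one, since each agent averages its peers, and columns sum to one, since each agent exerts the same total influence. The uniform case $\omega_{ij}=\frac{1}{N-1}$ is the canonical example. Since $H^{(t)}$ contains the arguments of round $t$, I will take $p_j^{(t)}$ to be $H^{(t)}$-measurable, i.e.\ known once the history is known. The one place randomness enters is the stochastic generation of arguments and beliefs by the LLMs. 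I will model this as the realized belief being an unbiased noisy version of the deterministic linear update, $p_i^{(t+1)} = (1-\alpha)p_i^{(t)} + \alpha\sum_{j\ne i}\omega_{ij}p_j^{(t)} + \varepsilon_i^{(t+1)}$, with $\mathbb{E}[\varepsilon_i^{(t+1)}\mid H^{(t)}]=0$. The hypothesis of ``no external oracle signals'' is what justifies this zero-mean assumption: no term correlated with $y^*$ beyond what is already in $H^{(t)}$ can enter.

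The main computation is to average over $i$ and evaluate at $y^*$. Summing the update over $i$ and dividing by $N$ gives
\[
\bar p^{(t+1)} = (1-\alpha)\bar p^{(t)} + \frac{\alpha}{N}\sum_j \Big(\sum_{i\ne j}\omega_{ij}\Big)p_j^{(t)} + \bar\varepsilon^{(t+1)}.
\]
The inner column sum equals $1$ by double stochasticity, so the middle term is $\alpha\bar p^{(t)}$. Hence $\bar p^{(t+1)} = \bar p^{(t)} + \bar\varepsilon^{(t+1)}$. Evaluating at coordinate $y^*$ and taking $\mathbb{E}[\cdot\mid H^{(t)}]$, we use the measurability of $\bar p^{(t)}$ and the zero-mean noise to get $\mathbb{E}[\mu_{t+1}\mid H^{(t)}]=\mu_t$.

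To complete the martingale claim I will also note two facts. First, $\mu_t$ is adapted to the filtration generated by $H^{(t)}$. Second, $0\le\mu_t\le1$, so integrability is automatic. An alternative route would be to model agents as Bayesian updaters, in which case $\mu_t$ is a Doob martingale $\mathbb{E}[\mathbf{1}\{y^*\}\mid H^{(t)}]$. I will mention this, but I will keep the linear-rule argument as primary because it matches Eq.~(\ref{rule}).

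The main obstacle is the modeling step rather than the algebra. In the deterministic reading the identity holds trivially, so the content lies in justifying the zero-mean noise assumption and the column-stochastic reading of ``symmetric.'' I will state both explicitly as the formal content of the hypotheses. I will also remark that if $\Omega$ is only row-stochastic, the average is a martingale only under the weighted average with the left Perron eigenvector of $(1-\alpha)I+\alpha\Omega$. This shows where symmetry is used, and it foreshadows how nonuniform, information-dependent weights can create drift.
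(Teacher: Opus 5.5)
The paper never proves this proposition. It is imported from \citet{choi2025debate} and restated only to motivate AceMAD, and the appendix proves just the Blackwell, score-separation and drift results. So there is no in-paper argument to compare against, and your computation has to stand on its own. Its main line does.

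Your doubly stochastic reading of ``symmetric'' loses nothing: a literally symmetric row-stochastic $\Omega$ has unit column sums automatically, and the uniform case is covered. The column-sum swap correctly shows that Eq.~(\ref{rule}) conserves the average exactly, so $\mu_{t+1}=\mu_t$ pathwise and the conditional identity follows at once.

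Two points need tightening.
\begin{enumerate}
\item The paper's $H^{(t)}$ contains only the arguments $m_j^{(\tau)}$, and arguments do not determine private beliefs. So ``since $H^{(t)}$ contains the arguments'' does not give measurability. Say instead that you enlarge the filtration to include the reported beliefs, as in $Z_{\mathrm{std}}$. Note also that this enlargement is forced: otherwise the right-hand side $\mu_t$ is not $H^{(t)}$-measurable and the claimed identity cannot hold.
\item In your stochastic version, the martingale property is simply the postulate $\mathbb{E}[\varepsilon_i^{(t+1)}\mid H^{(t)}]=0$. ``No oracle'' at most motivates conditional independence of the round-$(t+1)$ randomness from $y^*$ given $H^{(t)}$. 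Unbiasedness around the linear update is a separate assumption, and you should present it as one rather than as a consequence.
\end{enumerate}

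Drop or correct the Bayesian aside, because it is wrong in this setting. For a \emph{fixed} label $y$, $\mathbb{E}[\mathbf{1}\{Y=y\}\mid H^{(t)}]$ is a Doob martingale under the prior-predictive law $\mathbb{P}$, which averages over $Y$. The proposition instead evaluates belief at the realized truth and takes expectations with $y^*$ fixed; the appendix even takes $\mathbb{P}(Y=1)=1$.

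Let $Q=\mathbb{P}(\cdot\mid Y=y^*)$ and $\pi_t=\mathbb{P}(Y=y^*\mid H^{(t)})$. Then $Q$ has density $\pi_t/\pi_0$ with respect to $\mathbb{P}$ on $H^{(t)}$. Combining this with Jensen and the $\mathbb{P}$-martingale property gives $\mathbb{E}_Q[\pi_{t+1}\mid H^{(t)}]=\mathbb{E}_{\mathbb{P}}[\pi_{t+1}^2\mid H^{(t)}]/\pi_t\ge\pi_t$. The inequality is strict whenever round $t+1$ is informative.

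So a Bayesian posterior on the truth is a \emph{sub}martingale: it drifts toward $y^*$ whenever debate transmits information. That is the opposite of the curse, so this route contradicts the proposition rather than supporting it. Keep the linear-rule argument. Its explicit hypothesis --- no channel from $y^*$ and unbiased noise --- is exactly what makes the drift vanish.
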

\textbf{Implication:} The expected correctness of the system is constant over time. Consequently, if the initial state is dominated by the Crowd where $\mu_0 < 0.5$ due to correlated noise, the debate will simply fluctuate around this erroneous mean and fail to converge to $y^*$. To yield better decisions, one must introduce a mechanism to induce a \textbf{positive drift}, effectively transforming the process into a sub-martingale satisfying $\mathbb{E}[\mu_{t+1}] > \mu_t$.
\section{When Does Debate Matter?}
\label{sec:empirical_analysis}

The ``Debate-as-Martingale" hypothesis \cite{choi2025debate} poses a fundamental challenge to the field: if MAD cannot theoretically outperform majority voting in expectation, then the computational cost of iterative interaction is unjustifiable. However, we contend that this theoretical characterization relies on strong assumptions: specifically, that initial errors are independent and that the aggregation is purely linear.

To investigate the validity of this hypothesis in challenging scenarios, we conducted a systematic empirical analysis comparing \textit{Majority Voting} against three distinct MAD (\textit{Standard/Decentralized}, \textit{Centralized}, and \textit{Sparse}) across the challenging subsets of six diverse benchmarks.

\begin{table*}[htbp]
\small
    \centering
    
    \caption{\textbf{Performance Comparison on Challenging Subsets ($N=5$).} We compare Majority Voting with \colorbox{bggray}{three MAD variants}. \textcolor{arrowred}{$\uparrow$} and \textcolor{arrowblue}{$\downarrow$} indicate relative gains and losses. Overall, MAD variants consistently outperform the Majority Voting baseline across the benchmarks.}
    \label{tab:all_methods_comparison_n5}

\resizebox{\textwidth}{!}{
\begin{tabular}{l cccccc | c}
\toprule
\textbf{Methods} 
& \textbf{TruthfulQA} 
& \textbf{ARC-C} 
& \textbf{BBH} 
& \textbf{LogiQA} 
& \textbf{MedQA} 
& \textbf{MMLU-Pro} 
& \textbf{Average} \\
\midrule
Majority Voting 
& 11.2 & 23.1 & 22.1 & 12.8 & 9.4 & 5.5 & 14.0 \\
\rowcolor{bggray}
Decentralized MAD 
& 15.7\inc{4.5} & \textbf{38.9}\inc{15.8} & 37.6\inc{15.5} & \textbf{13.1}\inc{0.3} & 21.0\inc{11.6} & 6.6\inc{1.1} & 22.1\inc{8.1} \\
\rowcolor{bggray}
Centralized MAD 
& 14.3\inc{3.1} & \textbf{38.9}\inc{15.8} & \textbf{40.3}\inc{18.2} & 11.3\dec{1.5} & \textbf{21.3}\inc{11.9} & 6.3\inc{0.8} & 22.1\inc{8.1} \\
\rowcolor{bggray}
Sparse MAD 
& \textbf{17.5}\inc{6.3} & 38.0\inc{14.9} & 38.3\inc{16.2} & 11.6\dec{1.2} & 20.3\inc{10.9} & \textbf{7.3}\inc{1.8} & \textbf{22.2}\inc{8.2} \\
\bottomrule
\end{tabular}
}
\end{table*}

\subsection{Experimental Setup}
\label{sec:setup_regime}

\paragraph{Benchmarks.} Unlike previous studies that evaluate performance on full test sets, where modern LLMs often achieve high accuracy ($>80\%$), we focus strictly on the Challenging Interval. We constructed Challenging subsets for six benchmark:(1) \textit{Hallucination}: \textbf{TruthfulQA}~\cite{lin2022truthfulqa};
(2) \textit{Reasoning}: \textbf{ARC-C}~\cite{clark2018think}, \textbf{BBH}~\cite{suzgun2023challenging}; (3) \textit{Domain Knowledge}: \textbf{LogiQA}~\cite{liu2021logiqa}, \textbf{MedQA}~\cite{jin2021disease}, and \textbf{MMLU-Pro}~\cite{wang2024mmlu}. \fix{These subsets are defined by instances where a naive single-agent baseline consistently fails, often due to logical traps or domain-specific misconceptions.} For fairness in comparison,all baselines are evaluated on the same data subsets.More details are provided in Appendix~\ref{app:experiment}.

\paragraph{Baselines.}
We evaluate performance across varying agent group sizes $N \in \{2, 3, 5\}$ using four distinct protocols. As a baseline, we employ \textbf{Majority Voting}, representing ensemble aggregation without inter-agent communication. We compare this against \textbf{Standard MAD (Decentralized)}, where agents engage in peer-to-peer argumentation, and \textbf{Centralized MAD}, which organizes the debate under a board-meeting structure. Finally, we introduce \textbf{Sparse MAD}, a specialized debate variant explicitly designed for sparse signal recovery.For all the MAD method, we adopt agent
group sizes $N = 5$ in our main comparison and will ablate on the effect of $N$. For single-agent baselines, we average across 5 independent runs.

\subsection{Key Empirical Observations}
\label{sec:results_debate_vs_voting}
\begin{figure}[htbp]
    \centering
    \includegraphics[width=\linewidth]{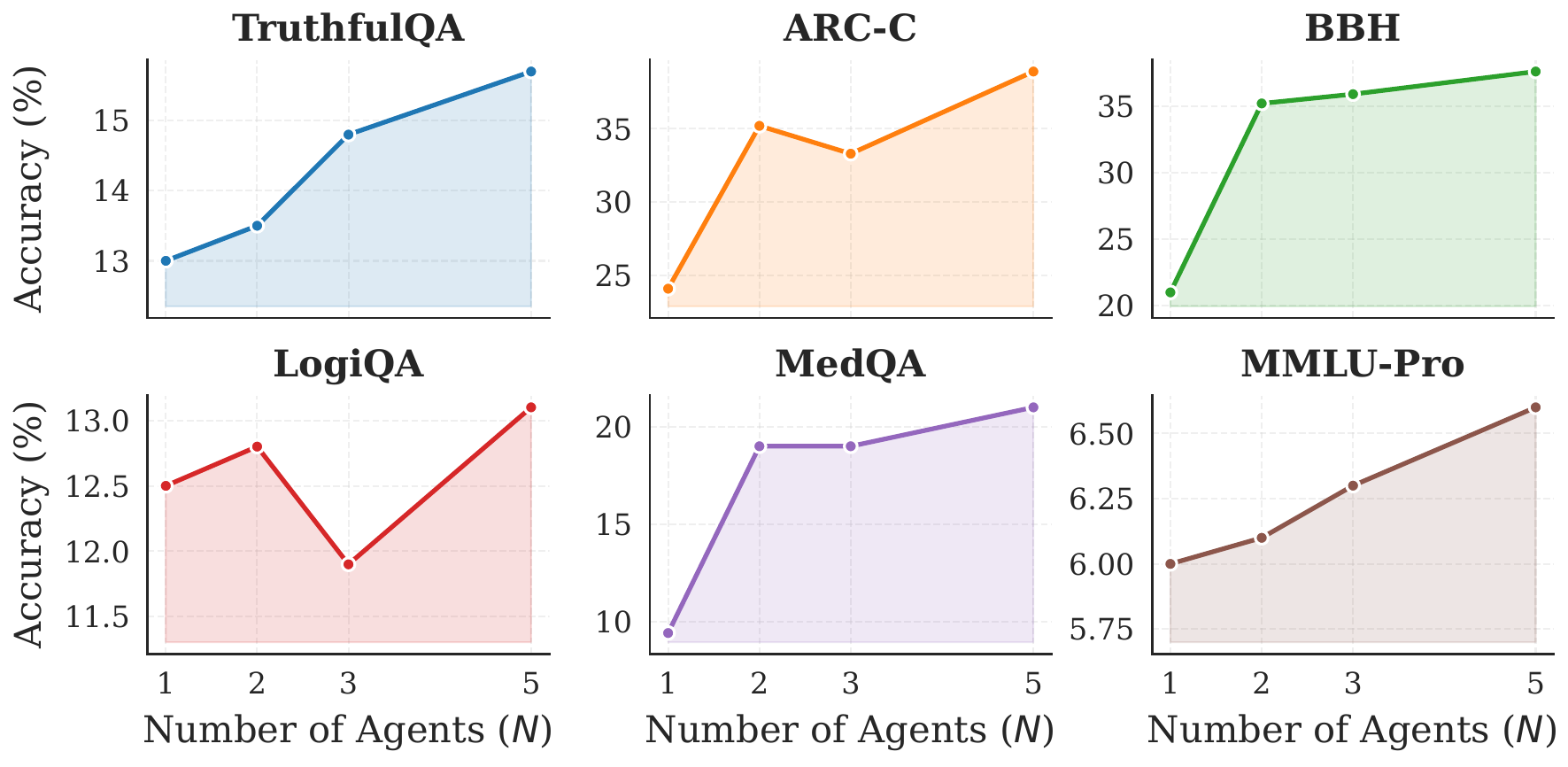}
    \caption{\textbf{Impact of Agent Scaling.} Performance on challenging subsets improves with group size ($N=1 \to 5$), indicating that multi-agent debate effectively filters correlated noise.}
    \label{fig:ablation}
\end{figure}
Table~\ref{tab:all_methods_comparison_n5} and Figure~\ref{fig:ablation} present the comprehensive results comparing \textit{Majority Voting} against three MAD methods across varying agent counts ($N \in \{1, 2, 3, 5\}$). Our analysis reveals three critical behavioral divergences.

\textbf{1. Voting Stagnates in Correlated Noise.}
As detailed in Table~\ref{tab:all_methods_comparison_n5}, Majority Voting struggles to extract correct answers in the challenging interval. Across all datasets, the voting baseline shows negligible improvement, or even degradation, as the ensemble size increases. For instance, increasing $N$ from 2 to 5 fails to break the performance ceiling on tasks like TruthfulQA and LogiQA. This empirical plateau confirms that errors in these adversarial regimes are highly correlated; simply adding more static agents merely reinforces the false consensus rather than diluting it.

\textbf{2. Debate Consistently Outperforms Voting.}
Table~\ref{tab:all_methods_comparison_n5} demonstrates that this limitation is structural to voting, not the models. \textbf{All MAD variants consistently outperform Majority Voting} across the tested benchmarks. In the most challenging reasoning tasks such as ARC-C and BBH, debate mechanisms achieve substantial margins over the voting baseline. This indicates that the debate process itself, regardless of the specific topology, acts as a necessary ``denoising" filter that disrupts the homogeneous error patterns of the crowd.

\textbf{3. Positive Scaling Dynamics.}
While ``scaling laws" typically unfold over orders of magnitude, we observe a distinct \textbf{positive scaling trend} even within the limited range of $N=1$ to $5$. As illustrated in Figure~\ref{fig:ablation}, unlike the flat voting trajectory, the performance of Decentralized MAD improves monotonically with the number of agents. 
Adding agents to a debate introduces new reasoning paths that help the system escape local optima, a property that is essential for breaking the Martingale Curse.

\subsection{Debate is Essential in the Challenging Interval}

The empirical evidence decisively refutes the generalized claim that debate is unnecessary. While prior studies suggest voting suffices for simple tasks, our findings demonstrate that \textbf{debate is indispensable specifically in the challenging interval}. In regimes dominated by correlated noise where Majority Voting is trapped by false consensus, debate mechanisms such as centralized, decentralized, or sparse variants act as a critical filter to leverage argument heterogeneity and identify correct reasoning paths. However, although debate significantly outperforms voting, the absolute performance gains suggest that standard MAD alone is insufficient, motivating our proposed AceMAD framework to further amplify these sparse truth signals.

\section{Theoretical Framework}
\label{sec:theory}
Having established that standard MAD under linear aggregation degenerates into a martingale in the challenging interval (Proposition~\ref{prop:martingale}), we now present the theoretical foundation of \textbf{AceMAD}. We formally demonstrate how our protocol breaks the Martingale Curse by (1) revealing asymmetric cognitive potential energy through enhanced observables,  (2)  statistically quantifying this potential energy via proper scoring, and (3) converting potential energy into submartingale drift toward truth through nonlinear amplification. 
Detailed proofs for all theorems are provided in Appendix~\ref{app:theory}.

\subsection{Asymmetric Cognitive Potential Energy in MAD}

To formalize the cognitive asymmetry, we characterize the observable signals available in different debate protocols.

\begin{definition}[\textbf{Debate Observable Space}]
\label{def:channels}
Let $\mathcal{M}$ be the space of natural language arguments and $\Delta(\mathcal{Y})$ be the belief simplex. We define the observable transcript:

The \textbf{Standard MAD} outputs a transcript consisting solely of arguments and self-reported beliefs:
\begin{equation}
Z_{\text{std}} = \{(m_i^{(t)}, p_i^{(t)})\}_{i=1}^N .
\end{equation}

The \textbf{AceMAD} outputs an augmented transcript that includes peer prediction and realized scores:
\begin{equation}
    Z_{\text{info}} = \{(m_i^{(t)}, p_i^{(t)}, \hat{q}_i^{(t)}, S_i^{(t)})\}_{i=1}^N ,
\end{equation}
where $\hat{q}_i^{(t)} \in \Delta(\mathcal{Y})$ represents agent $i$'s prediction of the peer belief distribution, and $S_i^{(t)}$ is the realized score.
\end{definition}

\paragraph{Interpretation.}
The key distinction lies in the \emph{second-order beliefs} $\{\hat{q}_i\}$. Standard MAD observes only first-order beliefs (what agents think is true), while \textbf{AceMAD} additionally captures \emph{meta-cognitive signals} (what agents think others believe). This distinction is not merely informational, it reveals \emph{asymmetric cognitive potential energy}. Truth-holders who understand collective error possess fundamentally different second-order models than a hallucinating majority that assumes consensus.

\begin{theorem}[\textbf{Information-Theoretic Manifestation of Cognitive Potential}]
\label{thm:blackwell}
Let $\sigma_{std}$ and $\sigma_{info}$ be the information channels corresponding to standard MAD and AceMAD, respectively. There exists a stochastic kernel $\kappa$ such that $\sigma_{std} = \kappa \circ \sigma_{info}$. Consequently, $\sigma_{info} \succeq_{Blackwell} \sigma_{std}$.
\end{theorem}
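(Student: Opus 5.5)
The plan is to show that $\sigma_{std}$ is a garbling of $\sigma_{info}$ by taking $\kappa$ to be the deterministic projection that throws away the extra coordinates. Blackwell dominance then follows from Blackwell's theorem in its garbling form.

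First, formalize the two channels as Markov kernels from the state space to the observation space. Let $\Theta$ be the underlying state: the ground truth $y^*$ together with whatever latent variables drive the agents' behavior. Then $\sigma_{info}(\cdot\mid\theta)$ is a distribution on
\[
\mathcal{Z}_{info}=\bigl(\mathcal{M}\times\Delta(\mathcal{Y})\times\Delta(\mathcal{Y})\times\mathbb{R}\bigr)^N ,
\]
and $\sigma_{std}(\cdot\mid\theta)$ is a distribution on $\mathcal{Z}_{std}=(\mathcal{M}\times\Delta(\mathcal{Y}))^N$.

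The key modeling point to make explicit is that both protocols run on the same probability space. The arguments $m_i^{(t)}$ and first-order beliefs $p_i^{(t)}$ that AceMAD reports have the same joint law, given $\theta$, as those of standard MAD. Equivalently, $Z_{std}$ is literally a sub-vector of $Z_{info}$: the peer predictions $\hat q_i^{(t)}$ and scores $S_i^{(t)}$ are additional elicited quantities that do not change the first-order transcript.

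Next, define the projection
\[
\pi:(m_i,p_i,\hat q_i,S_i)_{i=1}^N\mapsto(m_i,p_i)_{i=1}^N ,
\]
and let $\kappa(A\mid z)=\mathbf{1}\{\pi(z)\in A\}$. Check that $\kappa$ is a stochastic kernel; this only needs $\pi$ to be measurable, which holds because it is a coordinate projection on a product of standard Borel spaces. Then compute, for every measurable set $A$,
\[
(\kappa\circ\sigma_{info})(A\mid\theta)=\sigma_{info}(\pi^{-1}(A)\mid\theta)=\sigma_{std}(A\mid\theta),
\]
where the last equality uses the shared-probability-space assumption. This gives $\sigma_{std}=\kappa\circ\sigma_{info}$.

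For the conclusion, invoke Blackwell's theorem: if one experiment is a garbling of another, the garbled one is weakly less informative. Only the easy direction is needed, and it can be argued directly. For any decision problem with action set $\mathcal{A}$ and utility $u$, take any decision rule $\delta$ used with $\sigma_{std}$. The composed rule $\delta\circ\kappa$, applied to $\sigma_{info}$, attains the same expected utility under every prior. Hence
\[
\sup_\delta \mathbb{E}\,u \text{ under }\sigma_{info}\;\ge\;\sup_\delta \mathbb{E}\,u\text{ under }\sigma_{std},
\]
which is exactly $\sigma_{info}\succeq_{Blackwell}\sigma_{std}$.

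The main obstacle is the modeling step, not the mathematics. I have to justify that eliciting $\hat q_i^{(t)}$ leaves the conditional law of $(m_i^{(t)},p_i^{(t)})$ unchanged, since the private prediction is made before arguments are revealed and could in principle influence them. I would handle this by stating it as the standing assumption that AceMAD augments the transcript rather than altering it. Alternatively, I would define $\sigma_{std}$ as the marginal of $\sigma_{info}$ on the first-order coordinates, which makes the identity hold by construction. I would also note that the dominance can be strict whenever $\hat q_i$ carries information about $y^*$ that is not already contained in $(m,p)$, but the theorem as stated only claims weak dominance, so this need not be proved.
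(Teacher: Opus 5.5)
Your proposal is correct and follows the paper's proof: the paper also takes $\kappa$ to be the deterministic kernel induced by the projection that discards $(\hat q_i, S_i)$, concludes $\sigma_{std}=\kappa\circ\sigma_{info}$, and gets Blackwell dominance immediately from the garbling definition. Your explicit handling of the modeling step (assuming the first-order transcript has the same law under both protocols, or defining $\sigma_{std}$ as the marginal of $\sigma_{info}$) makes precise what the paper only asserts ``by construction,'' and you are right that the strictness claim added in the appendix is not part of this statement.
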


\textbf{Insight:}  Information-theoretically, AceMAD provides a strictly richer signal space. Decision-makers observing $Z_{info}$ can always reconstruct $Z_{std}$ (by discarding forecasts), but the reverse is impossible. This extra signal is crucial for identifying truth-holders.

\subsection{Quantifying Cognitive Potential via Proper Scoring}

In the challenging interval, the majority is confidently wrong. To distinguish truth-holders, we must quantify the \emph{potential energy gap} between agents who merely hold correct beliefs and those who understand why others error.

\begin{definition}[\textbf{Peer Prediction Scoring}]
\label{def:scoring}
The realized average belief of agent $i$'s peers is defined as:
\begin{equation}
\label{score}
 \bar{Q}_{-i}^{(t)} = \frac{1}{N-1}\sum_{j \neq i} p_j^{(t)}.
\end{equation}
We evaluate agent $i$'s prediction $\hat{q}_i^{(t)}$ using the Brier Score function $S: \Delta(\mathcal{Y}) \times \Delta(\mathcal{Y}) \to [0,1]$:
\begin{equation}
    S_i^{(t)} = 1 - ||\hat{q}_i^{(t)} - \bar{Q}_{-i}^{(t)}||_2^2.
\end{equation}
\end{definition}

\begin{theorem}[\textbf{Cognitive Potential Energy Separation}]
\label{thm:score_separation}
Consider the challenging interval where the Crowd $\mathcal{C}$ holds a shared misconception and the truth-holder $E$ holds the truth. Assume $E$ possesses a correct second-order model of the crowd's error. Then, according to the scoring rule in Definition~\ref{def:scoring}, the expected score of the truth-holder is strictly greater than that of any crowd agent $c \in \mathcal{C}$:
\begin{equation}
    \mathbb{E}[S_E] > \mathbb{E}[S_c].
\end{equation}
\end{theorem}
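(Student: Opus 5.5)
The plan is to reduce each expected score to a bias--variance decomposition of the Brier score and compare the two agents' prediction biases. Since the Brier score is strictly proper, for any fixed prediction $\hat q$,
\begin{equation*}
\mathbb{E}\bigl[\|\hat q - \bar Q\|_2^2\bigr] = \|\hat q - \mathbb{E}[\bar Q]\|_2^2 + \mathrm{tr}\,\mathrm{Cov}(\bar Q).
\end{equation*}
Hence $\mathbb{E}[S_i] = 1 - \|\hat q_i - \mathbb{E}[\bar Q_{-i}]\|_2^2 - \mathrm{Var}(\bar Q_{-i})$, where $\mathrm{Var}$ denotes the total variance of the realized peer average. If $\hat q_i$ is itself random, I condition on agent $i$'s information and apply the same decomposition conditionally.

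\textbf{Model the challenging interval concretely.} The Crowd $\mathcal{C}$ ($|\mathcal{C}|=K>N/2$) holds beliefs concentrated near a common distractor distribution $\pi_{\hat y}$, which puts most mass on $\hat y$. The truth-holder $E$ (at minimum one agent outside $\mathcal{C}$) holds a belief concentrated near $\pi_{y^*}$. The two peer averages are then:
\begin{equation*}
\mathbb{E}[\bar Q_{-E}] = \tfrac{1}{N-1}\Bigl(K\,\mathbb{E}[p_c] + \textstyle\sum_{j\notin \mathcal{C}\cup\{E\}}\mathbb{E}[p_j]\Bigr),
\end{equation*}
and, for $c\in\mathcal{C}$, $\mathbb{E}[\bar Q_{-c}]$ is obtained by replacing one crowd term with $E$'s belief.

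\textbf{Formalize the behavioral hypotheses.} For the truth-holder, "correct second-order model" means $\hat q_E = \mathbb{E}[\bar Q_{-E}\mid \mathcal{F}_E]$, so its bias term vanishes. For the crowd, the false-consensus effect is formalized as $\hat q_c$ being a projection of the agent's own belief, e.g.\ $\hat q_c = p_c$ or a convex combination weighted toward $p_c$.

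\textbf{Compare the two expected scores.} Substituting gives
\begin{equation*}
\mathbb{E}[S_E]-\mathbb{E}[S_c] = \|\hat q_c - \mathbb{E}[\bar Q_{-c}]\|^2 + \bigl(\mathrm{Var}(\bar Q_{-c}) - \mathrm{Var}(\bar Q_{-E})\bigr).
\end{equation*}
The first term is strictly positive. The crowd member ignores the truth-holder, whose belief enters $\bar Q_{-c}$ with weight $1/(N-1)$, so $\hat q_c$ overshoots on $\hat y$ by at least $\frac{1}{N-1}\bigl(\mathbb{E}[p_c(\hat y)]-\mathbb{E}[p_E(\hat y)]\bigr)>0$. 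That gives a bias of order $\delta^2/(N-1)^2$, where $\delta$ is the first-order belief gap between the truth-holder and the crowd. If $\hat q_c=p_c$ is random, the crowd also pays an additional idiosyncratic variance $\mathbb{E}\|p_c-\mathbb{E}p_c\|^2$, which only widens the gap.

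\textbf{The main obstacle is the variance difference.} The two peer sets differ in a single member, $E$ versus $c$, so the variance term is a difference of two $O(1/(N-1))$ quantities and need not have a definite sign. My first approach is to argue it is nonnegative or negligible. Correlated crowd errors make $\bar Q_{-E}$ (which contains all $K$ crowd agents) at least as variable as $\bar Q_{-c}$ only if the covariances behave, so this is not automatic. The cleanest route, which I expect the intended proof uses, is to make an explicit symmetry assumption: in the idealized interval, beliefs within each group are nearly deterministic (concentrated), so both variances are $O(\epsilon)$. The strict bias gap of order $\delta^2/(N-1)^2$ then dominates for sufficiently small noise $\epsilon$. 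I would state this regime explicitly rather than claim the inequality in full generality.
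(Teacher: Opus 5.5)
Your argument is sound in the regime you make explicit, but it takes a different route from the paper's. The paper's proof (Theorem~\ref{thm:proper} in the appendix) fixes one target $X=\bar Q_{-i}^{(t)}$ with mean $\mu$. It assumes $\hat Q_E=\mu$ and $\hat Q_c\neq\mu$ with positive probability, and applies the Brier decomposition to both agents against this same $X$. The $\mathbb{E}\|X-\mu\|_2^2$ terms then cancel, leaving the gap $\mathbb{E}\|\hat Q_c-\mu\|_2^2>0$, with no model of the crowd's forecast and no noise assumption.

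You instead keep the two scoring targets $\bar Q_{-E}$ and $\bar Q_{-c}$ distinct, as Definition~\ref{def:scoring} actually requires. You derive the crowd's bias $\frac{1}{N-1}(\mathbb{E}p_c-\mathbb{E}p_E)$ from an explicit false-consensus model, and you pay for this fidelity with a concentration hypothesis. The paper's route gives a clean statement with few assumptions; yours gives a statement about the scores the protocol really computes.

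The obstacle you flagged is real, not an artifact of your approach. The paper's cancellation silently identifies the two targets. It also drops the cross term $-2\,\mathbb{E}\langle \hat Q_c-\mu, X-\mu\rangle$ when $\hat Q_c$ is random. A counterexample:
\begin{itemize}
\item take $N=5$, with four crowd agents sharing a common belief $P\in\{0,0.4\}$ (equiprobable) in $y^*$, and $p_E(y^*)=0.9$;
\item let $\hat q_c=p_c$ and $\hat q_E=\mathbb{E}[\bar Q_{-E}]$;
\item the truth-holder's squared error in the $y^*$ coordinate is $\mathrm{Var}(P)=0.04$, while the crowd's is $\mathbb{E}[(P-0.9)^2]/16\approx 0.033$ (the binary Brier loss is twice these);
\item so $\mathbb{E}[S_c]>\mathbb{E}[S_E]$, even though the paper's stated hypotheses hold.
\end{itemize}
Some restriction like yours is therefore necessary.

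One step of yours needs correcting. The claim that a random $\hat q_c=p_c$ ``only widens the gap'' is false under the positively correlated errors of Definition~\ref{def:correlated_noise}. Since $p_c$ covaries with $\bar Q_{-c}$, the term $-2\,\mathrm{tr}\,\mathrm{Cov}(p_c,\bar Q_{-c})$ works in the crowd's favor; this is exactly what drives the example above. It is harmless once everything is $O(\epsilon)$, but you can avoid signing any variance term at all:
\begin{itemize}
\item By Jensen, $\mathbb{E}\|\hat q_c-\bar Q_{-c}\|_2^2\ge\|\mathbb{E}[\hat q_c-\bar Q_{-c}]\|_2^2=\frac{1}{(N-1)^2}\|\mathbb{E}p_c-\mathbb{E}p_E\|_2^2$ in your symmetric model.
\item By the law of total variance, the truth-holder's expected loss is at most $\mathrm{tr}\,\mathrm{Cov}(\bar Q_{-E})$.
\end{itemize}
Hence $\mathbb{E}[S_E]>\mathbb{E}[S_c]$ whenever the first quantity exceeds the second. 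Your concentration assumption guarantees this, and $\mathrm{Var}(\bar Q_{-c})$ never enters.
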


\textbf{Intuition:} The score separation stems from ``surprise." The Crowd, suffering from the False Consensus~\cite{ross1977false}, erroneously predicts that everyone agrees with them. When the truth-holder dissents, the Crowd incurs a prediction penalty. Conversely, the truth-holder anticipates the Crowd's hallucination, minimizing surprise, and maximizing their score.This score differential is the quantitative signature of \textit{asymmetric cognitive potential energy:} it reveals who possesses superior meta-cognitive understanding of the system's epistemic state.

\subsection{Converting Potential into Submartingale Drift}

The standard MAD uses linear averaging, which preserves the martingale property. To break this curse, we must release the latent cognitive potential energy identified by scoring, converting it into directed motion toward truth.

\begin{definition}[\textbf{Nonlinear Potential-to-Influence Conversion}]
\label{def:update_rule}
Let $w_i^{(t)}$ be the influence weight of agent $i$ at round $t$. We define the Multiplicative Weight Update rule as:
\begin{equation}
    w_i^{(t+1)} = w_i^{(t)} \cdot \exp(\eta \cdot S_i^{(t)}),
\end{equation}
where $\eta > 0$ is the amplification rate parameter. The aggregated belief is then $p_w^{(t)} \propto \sum_i w_i^{(t)} p_i^{(t)}$.
\end{definition}

\begin{theorem}[\textbf{Submartingale Convergence: Breaking the Curse}]
\label{thm:submartingale}
Let $\mu_t = p_w^{(t)}(y^*)$ be the probability assigned to the ground truth by the weighted aggregate belief. Under the conditions of Theorem \ref{thm:score_separation} and using the update rule in Definition \ref{def:update_rule}, for sufficiently small $\eta > 0$, the process $\{\mu_t\}_{t \ge 0}$ forms a \textbf{sub-martingale}:
\begin{equation}
    \mathbb{E}[\mu_{t+1} \mid H^{(t)}] \ge \mu_t + \Delta(\eta),
\end{equation}
where $\Delta(\eta) > 0$ represents a positive drift toward the truth.
\end{theorem}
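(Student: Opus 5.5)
The plan is to reduce the evolution of $\mu_t$ to a comparison between the weighted mass on truth-holders and the weighted mass on the Crowd, and then to show that the multiplicative update of Definition~\ref{def:update_rule} shifts weight toward the truth-holders at a rate set by the score gap of Theorem~\ref{thm:score_separation}.

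\textbf{Step 1: decompose $\mu_{t+1}-\mu_t$.} Write normalized weights $\pi_i^{(t)} = w_i^{(t)}/\sum_k w_k^{(t)}$, so that $\mu_t = \sum_i \pi_i^{(t)} p_i^{(t)}(y^*)$. I split the increment as
\begin{equation*}
\mu_{t+1}-\mu_t = \sum_i \bigl(\pi_i^{(t+1)}-\pi_i^{(t)}\bigr) p_i^{(t)}(y^*) + \sum_i \pi_i^{(t+1)}\bigl(p_i^{(t+1)}(y^*)-p_i^{(t)}(y^*)\bigr).
\end{equation*}
The second (belief-movement) term comes from the linear symmetric update~\eqref{rule}. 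By Proposition~\ref{prop:martingale}, its conditional expectation is zero when the weights are fixed. Because $\pi^{(t+1)}$ is $H^{(t)}$-measurable up to the score realizations, I will either assume beliefs are frozen within the aggregation step, or bound the cross term by $O(\eta)\cdot\mathbb{E}|p^{(t+1)}-p^{(t)}|$ and absorb it into the higher-order error.

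\textbf{Step 2: expand the weight-reallocation term in $\eta$.} Using $\pi_i^{(t+1)} = \pi_i^{(t)} e^{\eta S_i}/\sum_k \pi_k^{(t)} e^{\eta S_k}$, a first-order expansion gives
\begin{equation*}
\pi_i^{(t+1)}-\pi_i^{(t)} = \eta\,\pi_i^{(t)}\bigl(S_i-\bar S_\pi\bigr)+O(\eta^2),
\end{equation*}
where $\bar S_\pi$ is the $\pi$-weighted mean score. Since $S_i\in[0,1]$, the remainder is uniformly bounded by $C\eta^2$. The reallocation term is therefore $\eta\,\mathrm{Cov}_\pi(S,\,p(y^*))+O(\eta^2)$. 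In the two-type setting (Crowd $\mathcal{C}$ with $p_c(y^*)\le a$, truth-holder $E$ with $p_E(y^*)\ge b>a$), taking conditional expectations gives
\begin{equation*}
\mathbb{E}\bigl[\mathrm{Cov}_\pi\bigr] = \pi_E\,\pi_{\mathcal{C}}\,\bigl(\mathbb{E}[S_E]-\mathbb{E}[S_c]\bigr)\bigl(p_E(y^*)-p_c(y^*)\bigr).
\end{equation*}
This holds exactly when types are homogeneous; otherwise it is a lower bound using the uniform gap.

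\textbf{Step 3: conclude positive drift.} Theorem~\ref{thm:score_separation} gives $\delta := \mathbb{E}[S_E]-\mathbb{E}[S_c]>0$. Setting
\begin{equation*}
\Delta(\eta) = \eta\,\pi_E\pi_{\mathcal{C}}\,\delta\,(b-a) - C\eta^2 ,
\end{equation*}
this is positive for $\eta < \pi_E\pi_{\mathcal{C}}\delta(b-a)/C$. That is precisely the statement's "sufficiently small $\eta$".

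\textbf{Main obstacle.} The hard part is Step 3's dependence on $\pi_E^{(t)}$ and $\pi_{\mathcal{C}}^{(t)}$. As $\pi_E\to 1$ the drift vanishes, so a uniform $\Delta(\eta)>0$ only holds while both weights stay bounded away from $0$ and $1$. I would first state the result on the event $\{\mu_t\le 1-\epsilon\}$, which is natural since the target is convergence toward truth. Alternatively, I would let $\Delta$ depend on the current state. The second difficulty is that Theorem~\ref{thm:score_separation} controls only the expected score gap. The correlation between the score realizations and the next-round beliefs must therefore be handled by the conditioning on $H^{(t)}$ described in Step 1.
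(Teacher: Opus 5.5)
\textbf{Steps 2 and 3 are the paper's argument; Step 1 has a real gap.} Your Steps 2 and 3 match the paper in different notation. The proof of Theorem~\ref{thm:drift} collapses the crowd into one meta-agent and Taylor-expands $g(d)=\alpha_E e^{\eta d}/(\alpha_E e^{\eta d}+\alpha_C)$ to get $\alpha_E^{(t+1)}-\alpha_E^{(t)}\ge\alpha_E\alpha_C\eta D^{(t)}-K\eta^2$. It then multiplies by $p_E-p_C$ and takes $\eta\le\alpha_E^{(t)}\alpha_C^{(t)}\gamma/(2K)$. Your expansion $\eta\,\mathrm{Cov}_\pi(S,p(y^*))+O(\eta^2)$ gives the same leading term, and it treats unequal scores inside the crowd more carefully than the meta-agent grouping.

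The gap is in Step 1. Proposition~\ref{prop:martingale} concerns the \emph{uniform} average $\bar p^{(t)}$. It does not make your belief-movement term mean-zero once the aggregation weights are non-uniform, which holds from the first reweighting on. Under Eq.~(\ref{rule}) with $\omega_{ij}=1/(N-1)$ one has $p_i^{(t+1)}-p_i^{(t)}=\frac{\alpha N}{N-1}\bigl(\bar p^{(t)}-p_i^{(t)}\bigr)$. Hence $\sum_i\pi_i^{(t)}\bigl(p_i^{(t+1)}(y^*)-p_i^{(t)}(y^*)\bigr)=\frac{\alpha N}{N-1}\bigl(\bar p^{(t)}(y^*)-\mu_t\bigr)$. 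This quantity has three bad properties:
\begin{itemize}
\item it is deterministic;
\item for a given state $H^{(t)}$ it does not shrink with $\eta$;
\item it is strictly negative whenever $\mu_t>\bar p^{(t)}(y^*)$, which is exactly the regime your reallocation term is trying to create.
\end{itemize}
The remaining cross part is $O(\eta)\,|p^{(t+1)}-p^{(t)}|=O(\eta\alpha)$. That is first order in $\eta$, so it competes with your drift $\eta\pi_E\pi_{\mathcal{C}}\delta(b-a)$ rather than being absorbed into $C\eta^2$.

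\textbf{This branch cannot be repaired.} Under Eq.~(\ref{rule}) the deviations $p_i^{(t)}-\bar p^{(0)}$ are multiplied by $1-\frac{\alpha N}{N-1}$ each round, which is a contraction for $0<\alpha<1$. So every belief tends to $\bar p^{(0)}$, and $\mu_t\to\bar p^{(0)}(y^*)=\mu_0$ (recall $w^{(0)}\equiv 1$) no matter what the weights do. By bounded convergence, such a process cannot be a submartingale with even one strictly positive drift step.

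\textbf{What you need instead.} The only workable option is your other branch, freezing beliefs, and that is exactly what the paper assumes. It takes $p_E$ and $p_C$ fixed across rounds, so $p_w^{(t+1)}-p_w^{(t)}=\bigl(\alpha_E^{(t+1)}-\alpha_E^{(t)}\bigr)(p_E-p_C)$ and no belief-movement term appears. You should impose this as an explicit hypothesis rather than derive it from Proposition~\ref{prop:martingale}.

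Your second obstacle is likewise not resolved by conditioning. The unconditional gap of Theorem~\ref{thm:score_separation} does not yield $\mathbb{E}[S_E-S_c\mid H^{(t)}]\ge\gamma$; the paper simply adds this as Assumption~\ref{assump:gap}.

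Your caveat about state dependence is correct, and it applies equally to the paper's round-dependent choice of $\eta$. A $t$-independent $\Delta(\eta)>0$ is impossible for any $[0,1]$-valued process, since it would force $\mathbb{E}[\mu_t]\ge\mu_0+t\Delta(\eta)$.
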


\textbf{Breaking the Martingal Curse:} Unlike the random walk of standard MAD, it guarantees that the system's belief in the truth monotonically increases in expectation. 
The mechanism is clear: (1) Theorem~\ref{thm:score_separation} ensures truth-holders consistently achieve higher scores due to their superior cognitive potential; (2) Exponential weight accumulation  ($w_E \propto e^{\eta \sum S_E}$) amplifies this advantage over rounds; (3) Eventually, the truth-holder's influence dominates the aggregate, pulling the system toward the correct answer even from a minority starting position.
\section{The AceMAD Protocol}
\label{sec:algorithm}

\begin{figure}[t]
    \centering
    \includegraphics[width=0.85\linewidth]{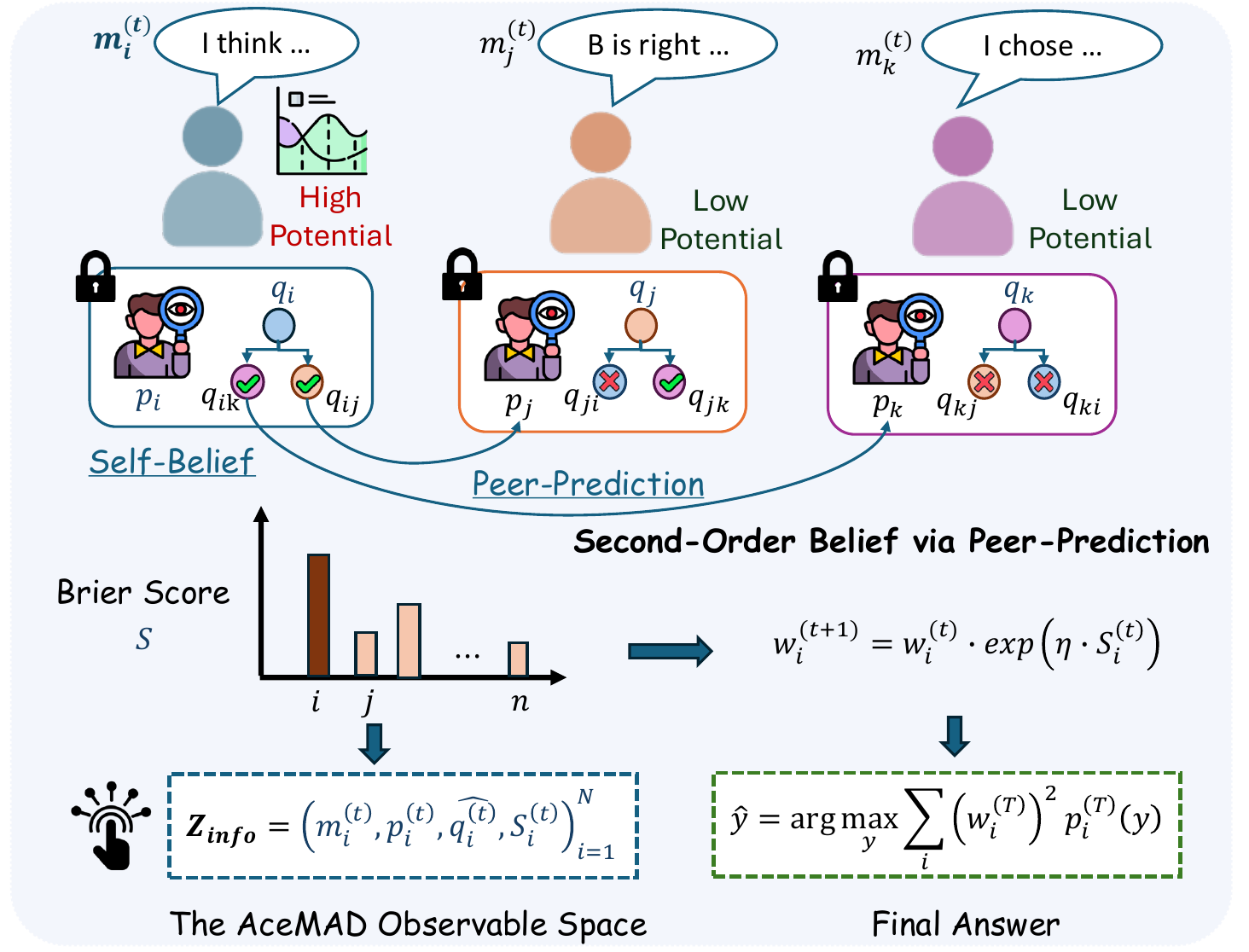}
    \caption{\textbf{The AceMAD: Converting Asymmetric Cognitive Potential into Submartingale Drift.} The process iterates through four phases: (1) \textbf{Argumentation:} Agents exchange arguments ($m_i^{(t)}$) in a shared context; (2) \textbf{Signal Extraction:} Agents privately commit to their self-belief ($p_i$) and peer-prediction ($q_i$),revealing asymmetric second-order cognition; (3) \textbf{Verification:} Brier Score $S_i$ quantifies the cognitive potential gap; (4) \textbf{Non-linear Amplification:} Exponential weight updates convert potential into directional drift, progressively amplifying truth-holders' influence until they dominate the aggregate, breaking the Martingale Curse.}
    \label{fig:method}
\end{figure}

Based on the theoretical framework established in Section \ref{sec:theory},we present the concrete realization of asymmetric cognitive potential energy in the \textbf{AceMAD}. The algorithm operationalizes our theory through a four-phase cycle that extracts, quantifies, and amplifies the latent cognitive asymmetry.

\subsection{Protocol Design}
\begin{table*}[t]
    \centering
    \caption{Main results on challenging subsets ($N=5$). We evaluate AceMAD ($T \in \{2, 3, 5\}$) against different baselines across six benchmarks using both GPT-4o-mini and open-source models. Performance gains over the best baseline are shown in parentheses.}
    \label{tab:main_results}

\resizebox{\textwidth}{!}{
\begin{tabular}{l cccccc | c}
\toprule
\textbf{Methods} 
& \textbf{ARC-C} 
& \textbf{LogiQA} 
& \textbf{MMLU-Pro} 
& \textbf{TruthfulQA} 
& \textbf{MedQA} 
& \textbf{BBH} 
& \textbf{Average} \\
\midrule

\rowcolor{maingray}
\multicolumn{8}{c}{\textit{Closed-Source Model: GPT-4o-mini}} \\

Single Agent
& 20.37 & 16.25 & 6.42 & 31.88 & 11.61 & 33.79 & 20.05 \\

\textit{Majority Voting}
& 23.15 & 21.88 & 5.50 & 17.94 & 23.23 & 22.07 & 18.96 \\

\textit{Decentralized MAD}
& 41.67 & 22.19 & 6.88 & 33.63 & 23.23 & 44.14 & 28.62 \\

\textit{Centralized MAD}
& 41.67 & 22.19 & \underline{\textbf{8.72}} & \underline{39.51} & \underline{24.19} & 39.66 & 29.32 \\

\textit{Sparse MAD}
& \underline{43.52} & \underline{22.19} & \underline{\textbf{8.72}} & 32.74 & 23.23 & \underline{47.24} & \underline{29.61} \\

\rowcolor{lightblue}
\textit{\textbf{AceMAD}} (T=2)
& 56.48 \inc{12.96} & 20.00 \dec{2.19} & 8.10 \dec{0.62} & 37.67 \dec{1.84} & \textbf{39.03} \inc{14.84} & 72.76 \inc{25.52} & 39.01 \inc{9.40} \\

\rowcolor{lightblue}
\textit{\textbf{AceMAD}} (T=3)
& 56.48 \inc{12.96} & \textbf{37.74} \inc{15.55} & \textbf{8.72} \inc{0.00} & \textbf{39.91} \inc{0.40} & 38.39 \inc{14.20} & \textbf{78.28} \inc{31.04} & \textbf{49.92} \inc{20.31} \\

\rowcolor{lightblue}
\textit{\textbf{AceMAD}} (T=5)
& \textbf{59.26} \inc{15.74} & 20.63 \dec{1.56} & 7.95 \dec{0.77} & 40.81 \inc{1.30} & \textbf{39.03} \inc{14.84} & 77.24 \inc{30.00} & 40.82 \inc{11.21} \\

\midrule

\rowcolor{maingray}
\multicolumn{8}{c}{\textit{Open-Source Model: Qwen3-235B-A22B-Instruct}} \\

Single Agent
& 65.74 & 42.50 & 35.93 & 74.89 & 55.81 & 42.07 & 52.82 \\

\textit{Majority Voting}
& 69.44 & 43.12 & 14.07 & 62.78 & 56.45 & 13.79 & 43.28 \\

\textit{Decentralized MAD}
& \underline{49.07} & 45.62 & \underline{28.90} & 71.75 & 59.03 & 30.69 & \underline{47.51} \\

\textit{Centralized MAD}
& 13.89 & 45.62 & 24.01 & 71.75 & 54.84 & \underline{65.17} & 45.88 \\

\textit{Sparse MAD}
& 13.89 & \underline{47.81} & 13.91 & \underline{\textbf{83.41}} & \underline{60.97} & 63.10 & 47.18 \\

\rowcolor{lightblue}
\textbf{\textit{AceMAD}} (T=2)
& \textbf{82.43} \inc{33.36} & \textbf{59.69} \inc{11.88} & 14.37 \dec{14.53} & 60.49 \dec{22.92} & \textbf{69.68} \inc{8.71} & 89.66 \inc{24.49} & \textbf{62.72} \inc{15.21} \\

\rowcolor{lightblue}
\textbf{\textit{AceMAD }}(T=3)
& 80.56 \inc{31.49} & \textbf{59.69} \inc{11.88} & 27.06 \dec{1.84} & 78.03 \dec{5.38} & 27.10 \dec{33.87} & 91.03 \inc{25.86} & 60.58 \inc{13.07} \\

\rowcolor{lightblue}
\textit{\textbf{AceMAD}} (T=5) 
& 77.78 \inc{28.71} & 28.44 \dec{19.37} & \textbf{36.54} \inc{7.64} & 40.81 \dec{42.60} & 42.26 \dec{18.71} & \textbf{92.07} \inc{26.90} & 52.98 \inc{5.47} \\

\bottomrule
\end{tabular}
}
\end{table*}

The algorithm operates over $T$ discrete rounds. We initialize all agents with uniform weights $w_i^{(0)} = 1$. Each round $t \in [1, T]$ consists of four synchronous phases:

\textbf{Phase 1: Argumentation (Context Transmission).}
Each agent $i$ generates a natural language argument $m_i^{(t)}$ conditioned on the query $x$ and the conversation history $H^{(t-1)}$. This phase is identical to Standard MAD, ensuring equal information access.

\textbf{Phase 2: Signal Extraction (Prediction).}
Before observing current peer arguments, each agent privately submits: 
(1) \textbf{Self-Belief $p_i^{(t)}$}: The agent's prediction of the ground truth $Y$;
(2) \textbf{Peer-Prediction $\hat{q}_i^{(t)}$}: A prediction of the \textit{average belief distribution} of all other agents in the current round.

\textbf{Phase 3: Verification (Scoring).}
The system calculates the realized peer average $\bar{Q}_{-i}^{(t)}$ Eq.(\ref{score}) and assigns each agent a reputation score $S_i^{(t)}$ using the Brier score, measuring alignment between prediction $\hat{q}_i^{(t)}$ and realization $\bar{Q}_{-i}^{(t)}$.

\textbf{Phase 4: Non-linear Amplification (Update).}
To break the martingale curse, influence weights are updated via the multiplicative rule derived in Theorem \ref{thm:submartingale}
, exponentially amplifying agents who successfully predict group behavior. The final decision uses weighted aggregation:
\begin{equation}
\label{prediction}
 \hat{y} = \arg\max_y \sum_i (w_i^{(T)})^2 p_i^{(T)}(y).   
\end{equation}
The complete procedure is summarized in Algorithm \ref{alg:AceMAD}.

\subsection{Inducing Agent Heterogeneity}
\label{sec:heterogeneity_principle}

To satisfy Definition~\ref{def:correlated_noise}'s requirement of a hallucinating majority and isolated truth-holders, we induce \textbf{inference heterogeneity} within a single base LLM through a dual-strategy approach:

(1)\textbf{The Crowd:} A subset of agents is configured to approximate the model's maximum likelihood behavior. By suppressing stochasticity, these agents consistently converge on the high-probability tokens present in the pre-training distribution, thereby reproducing the ``common misconceptions" and ``false consensus" characteristic of correlated noise;(2)\textbf{The Truth-Holder}: 
A minority of agents is intentionally perturbed from the standard generation path by assigning distinct personas or eliciting alternative thinking modes (e.g., System~2 reasoning~\cite{kahneman2011thinking,li2025system}), together with increased sampling stochasticity. This induces exploration of the distributional tail and enables recovery of sparse truth signals otherwise suppressed by majority bias.
This operationalizes the theoretical setup: the majority exhibits correlated errors, while truth-holders provide the asymmetric cognitive potential energy necessary.

\section{Experiments}

\subsection{Experimental Setup}
We evaluate AceMAD on the same challenging subsets and baseline methods($N=5$) established in Section~\ref{sec:setup_regime}.
The key difference lies in how we satisfy the requirement of inference heterogeneity: we induce a dual-strategy agent distribution within the same base LLM by configuring a majority as ``Crowd" agents (reproducing common misconceptions) and a minority as ``Truth-holder" agents (assigned critical-thinking personas with higher sampling stochasticity to explore sparse signals). We test \textit{AceMAD} across $T \in \{2, 3, 5\}$ rounds against Decentralized, Centralized, and Sparse MAD (fixed at $T=3$). Implementation details and prompt are provided in Appendices~\ref{app:experiment} and \ref{app:prompt}.

\subsection{Main Results}

Table~\ref{tab:main_results} summarizes our main results across six benchmarks. \textit{AceMAD} consistently and substantially outperforms all baseline methods, empirically validating our theoretical claim that peer prediction breaks the Martingale Curse. With \texttt{GPT-4o-mini}, \textit{AceMAD} at  $T=3$ rounds achieves 49.92\% average accuracy, representing a 74\% relative improvement over Decentralized MAD and a huge improvement over Majority Voting, which fails to filter correlated noise as predicted. The gains are most pronounced on \textbf{reasoning-heavy} tasks such as BBH and ARC-C, where logical traps induce the strongest collective hallucinations.Importantly, these improvements generalize across model architectures: similar performance gains are observed on the open-source model \texttt{Qwen3-235B-A22B-Instruct}, confirming that our peer-prediction mechanism targets a fundamental cognitive asymmetry rather than exploiting model-specific biases. To further verify robustness, we evaluate AceMAD on the \texttt{DeepSeek-V3.1} model and \texttt{Llama-3.1-8B-Instruct}, observing consistent and substantial gains over all MAD baselines (see Appendices~\ref{app:deepseek} and \ref{app:8b_analysis} for detailed results). Regarding iteration depth, $T=3$ emerges as the optimal configuration: $T=2$ provides insufficient rounds for the sub-martingale drift to amplify truth signals, while $T=5$ exhibits performance degradation on tasks like LogiQA and MedQA, suggesting that excessive iterations allow the persistent majority to reassert pressure, causing truth-holders to eventually capitulate. These patterns confirm that \textit{AceMAD} successfully induces the positive drift toward truth guaranteed by Theorem~\ref{thm:submartingale}, effectively recovering truth when initial majorities are confidently wrong.

\section{Discussion}

We now discuss the mechanism to understand \textit{when and why asymmetric cognitive potential energy emerges}, and \textit{identify its fundamental boundaries}.

\noindent\textbf{RQ1: What forms of heterogeneity are necessary? }

Table~\ref{tab:heterogeneity} reveals that AceMAD's success is agnostic to the source of diversity which is defined in \S\ref{sec:heterogeneity_principle}.We test three distinct strategies: \textit{cross-model mixing} by combining \texttt{GPT-4o-mini} and \texttt{Llama-3.1-8B-Instruct} to leverage divergent pre-training biases, \textit{persona-driven roles} by assigning specialized characters to prevent premature consensus, and \textit{cognitive system variation} by mixing ``fast thinking" with ``slow thinking" patterns.
Table~\ref{tab:heterogeneity} reveals that all three substantially outperform homogeneous Decentralized MAD, with task-specific sensitivity: cross-model heterogeneity excels on knowledge tasks like MedQA by pooling varied internal representations, while persona and cognitive diversity prove most effective on logical reasoning benchmarks by enforcing multi-perspective scrutiny. It shows that the mechanism is the asymmetry in second-order beliefs, not the particular instantiation of diversity.

\begin{table*}[htbp]
\centering

\caption{Impact of heterogeneity configurations on \textit{AceMAD} performance. Results demonstrate the effectiveness of inducing diversity via cross model mixing, persona-driven role-playing, and cognitive system variation to satisfy.}
\label{tab:heterogeneity}

\resizebox{\textwidth}{!}{
\begin{tabular}{l cccccc | c}
\toprule
\textbf{Methods} 
& \textbf{ARC-C} 
& \textbf{LogiQA} 
& \textbf{MMLU-Pro} 
& \textbf{TruthfulQA} 
& \textbf{MedQA} 
& \textbf{BBH} 
& \textbf{Average} \\
\midrule

\textit{Decentralized MAD}
& \underline{41.67} & \underline{22.19} & \underline{8.72} & \underline{33.63} & \underline{23.23} & \underline{44.14} & \underline{28.93} \\

\rowcolor{lightblue}
\textit{\textbf{AceMAD}} (Different LLMs)
& 56.48 \inc{14.81} & 20.94 \dec{1.25} & 9.63 \inc{0.91} & 37.22 \inc{3.59} & \textbf{38.71} \inc{15.48} & 74.14 \inc{30.00} & 39.52 \inc{10.59} \\

\rowcolor{lightblue}
\textit{\textbf{AceMAD}} (Different Persona)
& 60.19 \inc{18.52} & \textbf{23.13} \inc{0.94} & \textbf{12.08} \inc{3.36} & \textbf{46.19} \inc{12.56} & 23.23 \inc{0.00} & \textbf{76.21} \inc{32.07} & \textbf{40.17} \inc{11.24} \\

\rowcolor{lightblue}
\textit{\textbf{AceMAD}} (Different Thinking System)
& \textbf{61.11} \inc{19.44} & 22.19 \inc{0.00} & \textbf{12.08} \inc{3.36} & 43.95 \inc{10.32} & 23.23 \inc{0.00} & 37.59 \dec{6.55} & 33.36 \inc{4.43} \\

\bottomrule
\end{tabular}
}
\end{table*}

\noindent\textbf{RQ2: What is the role of second-order peer prediction? } 

To isolate the impact of our core components, we compare the \textit{Full Protocol} against two ablation variants in Figure~\ref{fig:ablation}: (1) \textit{\textbf{Self-Belief Only}}, which modulates weights using only the agent's self-reported confidence ($p_i$) without peer forecasts; and (2) \textit{\textbf{Uniform Weight}}, effectively reducing the system to \textit{Standard MAD}. The results consistently show that the \textit{Full Protocol} achieves the highest accuracy across all benchmarks, particularly in tasks like ARC-C and BBH where the margin over the \textit{Self-Belief Only} variant is most pronounced. This gap demonstrates that first-order belief is a noisy signal in the \textit{Challenging Interval}, as hallucinating majorities are often highly confident yet fundamentally wrong. Only by incorporating the second-order \textit{Peer-Prediction} ($\hat{q}_i$) can the system identify  agents who not only know the truth but also correctly model the crowd's misconception. These findings confirm that peer prediction is the critical mechanism for breaking the \textit{Martingale Curse} and inducing a positive drift toward the ground truth.

\begin{figure}[htbp]
    \centering
    \includegraphics[width=1\linewidth]{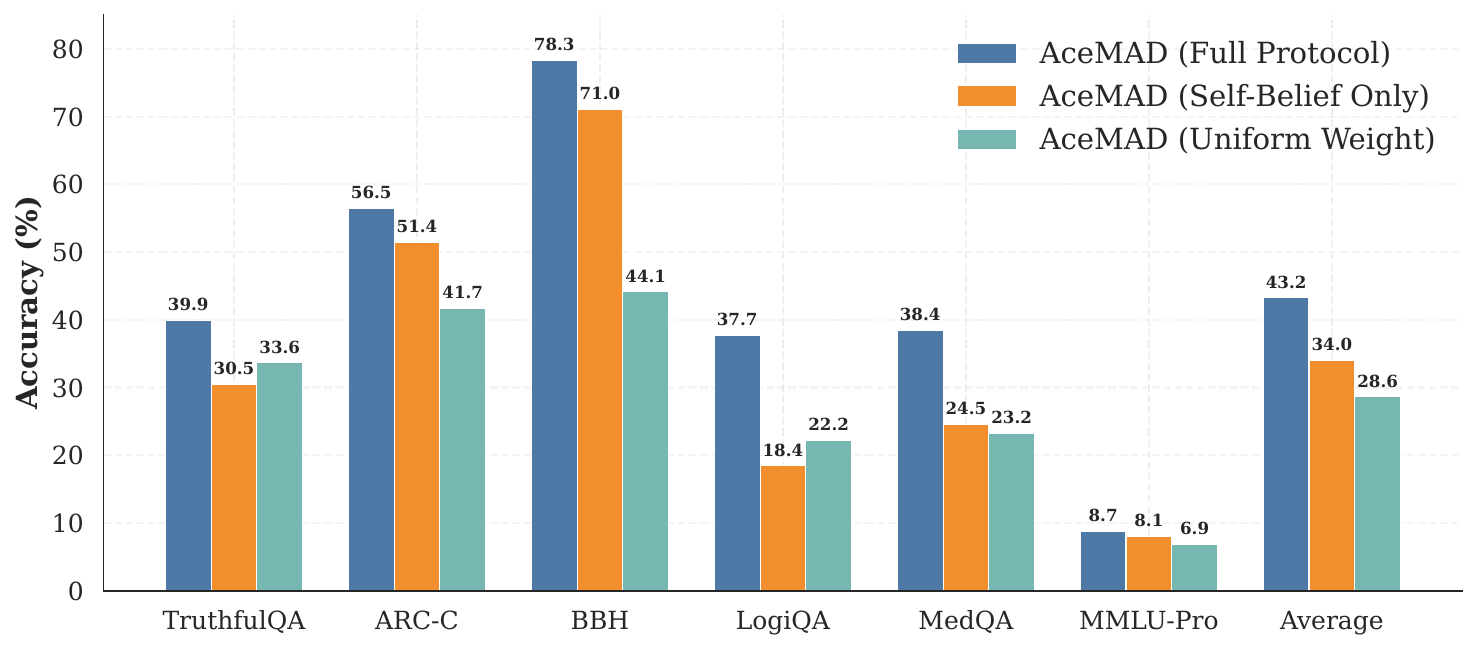}
    \caption{Ablation study of \textit{AceMAD}. We compare the full protocol against two variants: (1) \textit{Self-Belief Only}, which ignores second-order peer forecasts, and (2) \textit{Standard MAD}, which uses uniform weighting. The results highlight that peer prediction is essential for filtering out incorrect majority beliefs.}
    \label{fig:ablation}
\end{figure}

\noindent\textbf{RQ3: Does scaling always help? } 

Figure~\ref{fig:agent_num} uncovers a striking non-monotonicity. Performance improves substantially from $N=1$ to $10$, confirming that larger groups increase the probability of surfacing at least one truth-holder and provide richer reasoning paths for the peer-prediction mechanism to exploit. However, beyond $N=10$, accuracy plateaus or even declines, most notably on BBH. This reveals a critical boundary: as the crowd grows, correlated noise becomes increasingly homogeneous and overwhelming. Even with exponential weight amplification, the truth-holder's signal can be drowned out by sheer majority pressure. Moreover, agent density creates information overload in the natural language channel, degrading signal-to-noise ratios and enabling the crowd to converge toward sophisticated but incorrect consensus. This suggests an optimal group size exists where information gain is maximized while minimizing collective reinforcement of misconceptions.

\begin{figure}[htbp]
    \centering
    \includegraphics[width=1\linewidth]{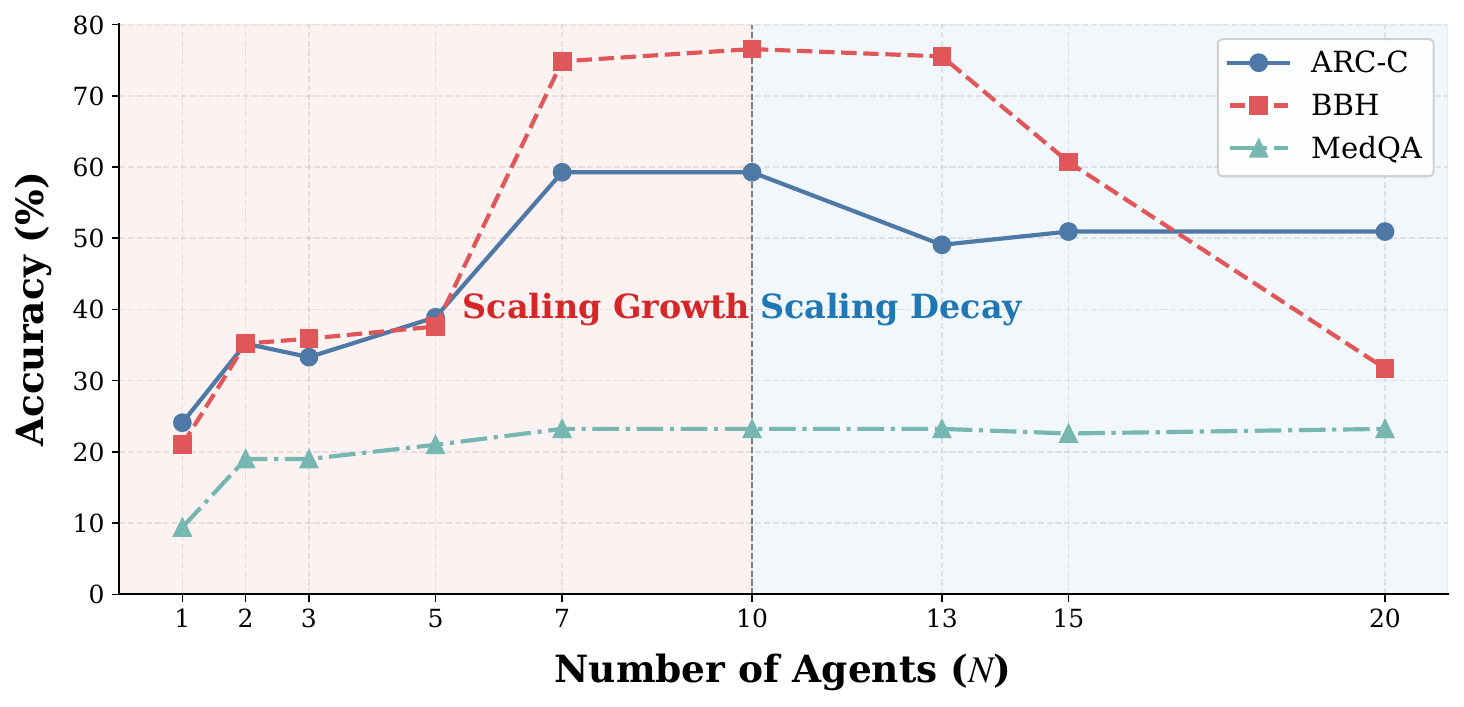}
    \caption{Impact of agent scaling on accuracy. The performance trajectory exhibits two distinct regimes: (1) \textbf{Scaling Growth} ($N \le 10$), where increasing the group size enhances the probability of surfacing sparse truth signals, and (2) \textbf{Scaling Decay} ($N > 10$), where an oversized majority reinforces correlated noise, leading to a diminished signal-to-noise ratio and collective fallacies.}
    \label{fig:agent_num}
\end{figure}

\section{Related Work}
Due to the improved reasoning ability of large language models~\cite{zhang2025viper,wang2026forest}, Multi-agent systems (MAS) based on LLMs have attracted significant research attention, with comprehensive surveys reviewing state-of-the-art approaches~\cite{guo2024largelanguagemodelbased, yan2025beyond}. 

\subsection{Multi-Agent Debate: Mechanisms and Diversity}

The core premise of MAD, that iterative argumentation can filter noise and improve reasoning has motivated numerous system designs across diverse tasks~\cite{liu2024skepticism,liu2025stepwise, liang2024encouraging,tang2024medagents, chen2024comm}. Early works focused on optimizing communication structures, with \citep{xiong2023examining} introducing enhancements grounded in debate theory and \citet{chanchateval} developing evaluation frameworks. Subsequent research has explored various architectural innovations, including dynamic communication protocols~\cite{liu2024dynamic, zhang2024cut} and efficient topology designs~\cite{li2024improving}.
A particularly relevant research direction emphasizes agent heterogeneity as a mechanism to escape echo chambers. \citet{chen2024reconcile} leverage heterogeneous LLM architectures, while \citet{liang2024encouraging, wang2024unleashing} inject distinct personas to induce divergent reasoning paths. \citet{liubreaking, chu2024exploring} further control generation diversity to prevent premature consensus, and learning-based approaches have been proposed to optimize debate dynamics~\cite{liu2024dynamic}. While prior work induces diversity heuristically, we provide the first theoretical characterization of why heterogeneity matters: it creates asymmetric cognitive potential energy that can be systematically exploited via peer prediction to break the Martingale Curse.

\subsection{Theoretical Limitations of Multi-Agent Debate}

Despite widespread adoption, recent analyses have revealed fundamental limitations of MAD. \citet{cemri2025multi} identified 14 distinct failure modes, while \citet{zhang2025if} demonstrated that MAD does not consistently outperform single-agent baselines. \citet{huang2024large} showed that LLMs lack sufficient self-correction capabilities, and \citet{smit2024should} found that MAD underperforms advanced single-agent reasoning methods. \citet{wang2024rethinking} further argued that well-prompted single agents can sometimes surpass MAD performance.
More critically, several works have documented convergence to incorrect consensus. \citet{xiong2023examining, zhang2025if} observed answer subversion during debate, and \citet{estornell2024multi} showed that MAD systems systematically converge to majority opinion even when that opinion reflects common misconception, precisely the Challenging Interval we formalize. \citet{benedikt2025voting} demonstrated that multiple debate rounds can decrease performance, and \citet{choi2025debate} provided a theoretical foundation by characterizing standard MAD as a martingale process: without external supervision, the expected belief in the correct answer remains constant across rounds.
We provide the first mechanism to break this Martingale Curse endogenously by converting asymmetric cognitive potential into submartingale drift.
\section{Conclusion}

MAD often fails due to the ``Martingale Curse," where correlated noise causes agents to converge toward shared hallucinations. We propose \textbf{AceMAD}, a framework that breaks this curse by harnessing \textit{asymmetric cognitive potential energy}. By leveraging peer-prediction to reveal second-order beliefs, we identify truth-holders who accurately anticipate collective errors, a meta-cognitive superiority formalized as Blackwell dominance. Our nonlinear mechanism converts this latent potential energy into submartingale drift, guaranteeing that truth-holders’ influence dominates the aggregate belief even from initial minority positions. Beyond proving that peer-prediction is the essential driver for signal recovery, we uncover a scaling threshold where excessive group size reinforces collective fallacies. AceMAD provides a model-agnostic 
mechanism to elicit truth from noise, 
offering a new paradigm 
for MAD in challenging environments.

\bibliography{references}
\bibliographystyle{unsrtnat}

\newpage
\appendix
\onecolumn

\section{Algorithm}
\label{app:alg}

We provide the complete algorithmic specification of AceMAD in Algorithm~\ref{alg:AceMAD}. The protocol operationalizes the theoretical framework developed in Section~\ref{sec:theory} through a four-phase iterative process: (1) Argumentation, where agents generate natural language arguments; (2) Signal Extraction, where agents commit to both self-beliefs and peer-predictions; (3) Verification, where the Brier score quantifies cognitive potential; and (4) Non-linear Amplification, where exponential weight updates convert this potential into submartingale drift. The final decision aggregates beliefs using squared weights to further amplify the truth-holder's influence.

\begin{algorithm}[htb]
   \caption{AceMAD}
   \label{alg:AceMAD}
\begin{algorithmic}[1]
   \STATE {\bfseries Input:} Query $x$, Set of Agents $\mathcal{A}$, Rounds $T$, Rate $\eta$
   \STATE {\bfseries Initialize:} Weights $w_i^{(0)} \leftarrow 1$ for all $i \in \{1, \dots, N\}$
   \STATE {\bfseries Initialize:} History $H^{(0)} \leftarrow \emptyset$
   \FOR{$t=1$ {\bfseries to} $T$}
       \FOR{each agent $i \in \mathcal{A}$ in parallel}
           \STATE Generate argument $m_i^{(t)}$ based on $H^{(t-1)}$
           \STATE Commit Self-Belief $p_i^{(t)}$ and Peer-Forecast $\hat{q}_i^{(t)}$
       \ENDFOR
       \STATE Update History $H^{(t)} \leftarrow H^{(t-1)} \cup \{m_1^{(t)}, \dots, m_N^{(t)}\}$
       \FOR{each agent $i \in \mathcal{A}$}
           \STATE Calculate peer realization $\bar{Q}_{-i}^{(t)} \leftarrow \frac{1}{N-1} \sum_{j \neq i} p_j^{(t)}$
           \STATE Compute score $S_i^{(t)} \leftarrow 1 - ||\hat{q}_i^{(t)} - \bar{Q}_{-i}^{(t)}||_2^2$
           \STATE Update weight $w_i^{(t)} \leftarrow w_i^{(t-1)} \cdot \exp(\eta \cdot S_i^{(t)})$
       \ENDFOR
       \STATE Normalize weights $w^{(t)} \leftarrow w^{(t)} / \sum_j w_j^{(t)}$
   \ENDFOR
   \STATE {\bfseries Return} $\arg\max_y \sum_i (w_i^{(T)})^2 p_i^{(T)}(y)$
\end{algorithmic}
\end{algorithm}

\section{Experimental Details}
\label{app:experiment}

\subsection{Implementation}
All experiments are conducted using the same base infrastructure to ensure fair comparison. We implement AceMAD using the following configuration:

(1)~\textbf{Model Selection}: We primarily use \texttt{GPT-4o-mini} as our base model, with additional validation on \texttt{Qwen3-235B-A22B-Instruct}, \texttt{DeepSeek-V3.1}, and \texttt{Llama-3.1-8B-Instruct} to verify generalization across architectures.

(2)~\textbf{Hyperparameters}: The amplification rate $\eta$ is set to 2.0 across all experiments unless otherwise specified. We found this value provides sufficient signal amplification while maintaining numerical stability.

(3)~\textbf{Agent Configuration}: Following Section~\ref{sec:heterogeneity_principle}, we induce heterogeneity through a dual-strategy approach in main experiment:
\begin{itemize}
    \item Crowd agents (80\% of the group): Temperature = 0.1, configured to approximate maximum likelihood behavior
    \item Truth-holder agents (20\% of the group): Temperature = 0.6, assigned critical-thinking personas to explore distributional tails
\end{itemize}

\subsection{Challenging Subset Construction}
For each benchmark, we construct the \textit{Challenging Interval subset}\footnote{We will release our challenging subsets for all benchmarks.} by filtering instances where a naive single-agent baseline (Temperature = 0.7) consistently fails. Specifically:

1. Run the single-agent baseline 5 times on each instance.

2. Select instances where accuracy $<$40\% across runs.

3. Manually verify that failures are due to systematic misconceptions rather than random errors.

4. The resulting subsets is the challenging subset.

This construction ensures we evaluate debate mechanisms precisely in the Challenging Interval where they should prove most valuable.

\subsection{Evaluation Metrics}
We report \textit{accuracy} as the primary metric, computed as the proportion of instances where the final aggregated decision matches the ground truth. For MAD variants, the final decision is determined by weighted voting (Eq.~\ref{prediction}). For majority voting, we use simple plurality among initial responses.

\section{Benchmarks}
\label{app:benchmark}
We evaluate AceMAD across six diverse benchmarks, each targeting different aspects of challenging reasoning. These benchmarks are specifically chosen because they expose common failure modes in LLMs: systematic biases, logical traps, and domain-specific misconceptions. Below we provide detailed descriptions of each benchmark and explain why they are particularly suitable for evaluating debate mechanisms in the Challenging Interval:

\noindent \textbf{TruthfulQA}~\cite{lin2022truthfulqa} is a benchmark designed to evaluate the truthfulness of LLMs, comprising 817 questions across 38 categories such as health, law, and finance. It is specifically constructed to induce models into generating false answers that mimic common human misconceptions, thereby assessing the model's ability to avoid hallucinations.

\noindent \textbf{ARC-Challenge}~\cite{clark2018think} focuses on grade-school science questions, with the Challenge subset strictly containing problems that retrieval-based or co-occurrence algorithms fail to answer. It requires models to possess not only scientific common sense but also deep reasoning capabilities to understand underlying physical processes.

\noindent \textbf{BBH}~\cite{suzgun2023challenging} (BIG-bench Hard) is a curated suite of 23 challenging tasks selected from BIG-bench where LLMs typically perform below the average human rater. It is designed to stress-test advanced reasoning abilities across diverse settings, including multi-step reasoning, symbolic manipulation, and complex logical structures.We chose the logic-related subset.

\noindent \textbf{LogiQA}~\cite{liu2021logiqa}, sourced from the National Civil Servants Examination of China, contains truth-holder-level logical reasoning and reading comprehension problems. It covers various forms of reasoning, including deductive and inductive logic, requiring models to extract logical relationships from complex texts for deep analysis.

\noindent \textbf{MedQA}~\cite{jin2021disease} is a large-scale biomedical question-answering dataset collected from medical licensing exams in the US (USMLE), Mainland China, and Taiwan. It benchmarks the model's proficiency in broad medical knowledge, clinical case analysis, and diagnostic reasoning within medical contexts.

\noindent \textbf{MMLU-Pro (Law)}~\cite{wang2024mmlu} is a robust version of the classic MMLU benchmark, designed to rigorously evaluate reasoning by increasing difficulty, removing noise, and expanding the option set. The Law subset specifically tests the model's mastery of complex legal provisions, case analysis, and jurisprudential logic.

\section{Baselines}
\label{app:baselines}
We compare AceMAD against four baseline approaches that represent different paradigms in Multi-Agent Debate. These baselines span the spectrum from zero-interaction ensemble methods (Majority Voting) to various MAD architectures with different communication topologies. Below we provide precise specifications of each baseline to ensure reproducibility and fair comparison:

\noindent \textbf{Decentralized MAD}~\cite{du2023improvingfactualityreasoninglanguage} adopts a fully connected communication topology. In each round of debate, every agent receives the complete history of responses from all other peer agents to update their own reasoning and answers, facilitating comprehensive information exchange across the entire collective.

\noindent \textbf{Sparse MAD}~\cite{li2024improving} is a variant of the decentralized framework designed to enhance computational efficiency. It utilizes a sparse communication topology where each agent interacts with only a limited subset of peers rather than the entire group, significantly reducing the information exchange overhead while maintaining collaborative reasoning.

\noindent \textbf{Centralized MAD}~\cite{guo2024largelanguagemodelbased} employs a star topology with a designated central agent or moderator. In this setup, the central agent is responsible for aggregating responses from all peer agents and generating a unified updated answer or summary at each debate round, acting as the primary bottleneck for information synthesis.

\noindent \textbf{Majority Voting} serves as a non-debate baseline that aggregates the initial zero-shot responses from multiple independent agents. The final answer is selected based on the most frequent prediction (consensus), which allows us to isolate the performance gains attributed specifically to the iterative multi-agent debate mechanism.

\section{Theoretical Analysis of AceMAD}
\label{app:theory}

\subsection{Setup and Challenging Interval Formulation}

We consider a single binary prediction task with ground truth
\[
Y \in \{0,1\},\qquad \mathbb{P}(Y=1)=1.
\]
There are \(N\) agents. One of them is an truth-holder \(E\) who possesses asymmetric cognitive potential energy, and the remaining \(N-1\) agents form the hallucinating majority \(C\) suffering from common misconceptions.

At round \(t\), each agent \(i\in\{1,\dots,N\}\) holds a subjective belief
\[
p_i^{(t)} := P_i^{(t)}(Y=1) \in [0,1].
\]
In the correlated noise regime, we assume that the truth-holder is systematically closer to the truth than the crowd:
\[
p_E^{(t)} \approx 1-\delta,\qquad
p_c^{(t)} \approx \varepsilon\quad\text{for all } c\in C,
\]
with \(\delta,\varepsilon\in(0,1/2)\), where $\delta$ represents the truth-holder's small error margin and $\varepsilon$ represents the crowd's persistent bias toward the incorrect answer.

In each round, the AceMAD protocol proceeds as follows:
\begin{enumerate}
    \item \textbf{Argue.} Agents produce natural-language arguments based on the debate history.
    \item \textbf{Commit.} Each agent privately commits to
    \begin{itemize}
        \item its own belief \(P_i^{(t)}\) over answers, and
        \item a \emph{peer prediction} \(\hat{Q}_i^{(t)}\), which is a prediction of the empirical mean of the other agents' reported beliefs.
    \end{itemize}
    \item \textbf{Reveal \& score.} All commitments are revealed. For each agent \(i\), we compute the realized average of others' reports,
    \[
    \bar{Q}_{-i}^{(t)} := \frac{1}{N-1}\sum_{j\neq i} P_j^{(t)},
    \]
    and assign a Brier-type peer prediction score
    \[
    S_i^{(t)} := 1 - \bigl\|\hat{Q}_i^{(t)} - \bar{Q}_{-i}^{(t)}\bigr\|_2^2.
    \]
    \item \textbf{Update.} We maintain non-negative weights \(\{w_i^{(t)}\}_{i=1}^N\). At the end of round \(t\), weights are updated multiplicatively:
    \[
    w_i^{(t+1)} = w_i^{(t)} \cdot \exp\bigl(\eta S_i^{(t)}\bigr),
    \]
    where \(\eta>0\) is a learning-rate parameter.
    \item \textbf{Aggregate.} The aggregated belief at round \(t\) is the weight-averaged belief
    \[
    p_w^{(t)}(Y=1) := 
    \frac{\sum_{i=1}^N w_i^{(t)}\,p_i^{(t)}}{\sum_{j=1}^N w_j^{(t)}}.
    \]
\end{enumerate}

We show that:
\begin{enumerate}
    \item The AceMAD channel Blackwell-dominates standard debate (Theorem~\ref{thm:blackwell});
    \item The truth-holder achieves a strictly higher expected peer-prediction score than the crowd (Theorem~\ref{thm:proper});
    \item Under multiplicative weight updates, the aggregated belief \(p_w^{(t)}(Y=1)\) forms a sub-martingale with a positive drift towards the truth (Theorem~\ref{thm:drift}).
\end{enumerate}

\subsection{Information-Theoretic Manifestation}

We view both standard debate and AceMAD as information channels from the ground truth \(Y\) to observable transcripts.

Let \(\sigma_{\mathrm{std}}\) denote the channel corresponding to standard multi-agent debate. On each instance, it outputs the transcript
\[
Z_{\mathrm{std}}
  = \bigl(A_i^{(t)}, P_i^{(t)}\bigr)_{i=1}^N,
\]
where \(A_i^{(t)}\) are the arguments and \(P_i^{(t)}\) the self-beliefs.

Let \(\sigma_{\mathrm{info}}\) denote the AceMAD channel. Its transcript is
\[
Z_{\mathrm{info}}
  = \bigl(A_i^{(t)}, P_i^{(t)}, \hat{Q}_i^{(t)}, S_i^{(t)}\bigr)_{i=1}^N,
\]
which strictly contains the standard transcript plus peer forecasts and scores.

We recall the Blackwell order on channels: for two channels \(\sigma_1\) and \(\sigma_2\) from \(Y\) to outputs \(Z_1\) and \(Z_2\), we say that \(\sigma_2\) Blackwell-dominates \(\sigma_1\) if there exists a stochastic kernel \(\kappa\) such that
\[
\sigma_1 = \kappa \circ \sigma_2.
\]

\begin{theorem}[Blackwell dominance of AceMAD]
\label{thm:blackwell_proof}
There exists a stochastic kernel \(\kappa\) such that
\[
\sigma_{\mathrm{std}} = \kappa \circ \sigma_{\mathrm{info}}.
\]
Hence, AceMAD Blackwell-dominates standard debate. Moreover, in the correlated noise regime, this dominance is strict.
\end{theorem}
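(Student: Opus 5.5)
The plan is to treat the weak dominance as a garbling argument and the strictness as the existence of one decision problem that separates the two channels. For the first part, let $\pi:\mathcal{Z}_{\mathrm{info}}\to\mathcal{Z}_{\mathrm{std}}$ be the coordinate projection
\[
\pi\bigl((A_i^{(t)},P_i^{(t)},\hat{Q}_i^{(t)},S_i^{(t)})_{i}\bigr)=(A_i^{(t)},P_i^{(t)})_{i}.
\]
Define the kernel $\kappa(\cdot\mid z)=\delta_{\pi(z)}$, which is deterministic and therefore a valid stochastic kernel. The key modelling point is that, on each instance, the arguments and self-beliefs produced under AceMAD have the same conditional law given $Y$ as under standard debate. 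Phase 1 of the protocol is identical to standard MAD, and the forecasts $\hat{Q}_i^{(t)}$ are committed privately. I will state this as the coupling assumption implicit in Definition~\ref{def:channels}: the marginal of $\sigma_{\mathrm{info}}(\cdot\mid y)$ on the $(A,P)$ coordinates equals $\sigma_{\mathrm{std}}(\cdot\mid y)$. Under that assumption, for every measurable $B$,
\[
(\kappa\circ\sigma_{\mathrm{info}})(B\mid y)=\sigma_{\mathrm{info}}(\pi^{-1}(B)\mid y)=\sigma_{\mathrm{std}}(B\mid y).
\]
This gives $\sigma_{\mathrm{std}}=\kappa\circ\sigma_{\mathrm{info}}$. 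Blackwell's theorem then yields that every decision maker does at least as well with $Z_{\mathrm{info}}$; equivalently, the garbling inequality $\mathbb{E}[u]$-optimum under $\sigma_{\mathrm{info}}$ is at least that under $\sigma_{\mathrm{std}}$ for all bounded utilities.

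For strictness, it suffices to show that no kernel $\kappa'$ satisfies $\sigma_{\mathrm{info}}=\kappa'\circ\sigma_{\mathrm{std}}$. A convenient route is to exhibit one decision problem, or one pair of states, where the info channel gives a strictly better value. The stated setup has $\mathbb{P}(Y=1)=1$, which makes any channel trivially uninformative about $Y$. I will therefore interpret strictness with respect to the latent agent type: which agent is the truth-holder $E$. Equivalently, I take the state to be $(Y,\text{identity of }E)$, with $Y$ allowed to vary in the correlated noise regime.

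In that regime, the first-order reports $p_E\approx 1-\delta$ and $p_c\approx\varepsilon$ can be made uninformative about reliability. The construction is a symmetric pair of worlds: in one the dissenter is a reliable truth-holder, in the other it is a noisy dissenter, and both induce the same law of $(A,P)$. The forecasts differ between the worlds. $E$ forecasts the crowd's misconception, so $\hat{Q}_E\approx\varepsilon$ while a naive crowd agent forecasts $\approx\varepsilon$ for everyone. The scores $S_i$ also differ, via the separation in Theorem~\ref{thm:score_separation}. Because the two worlds have equal $Z_{\mathrm{std}}$-distributions but different $Z_{\mathrm{info}}$-distributions, any garbling of $\sigma_{\mathrm{std}}$ produces identical outputs in both worlds. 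Hence it cannot reproduce $\sigma_{\mathrm{info}}$, and the dominance is strict.

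The main obstacle is precisely this strictness step. The degenerate prior and the informal ``$\approx$'' beliefs mean that strictness is not literally well-posed without enlarging the state space. The proof will therefore hinge on carefully specifying the two-world construction, so that the $(A,P)$-marginals match exactly while the $(\hat{Q},S)$-marginals differ. If exact matching of arguments is too strong, I would fall back on a weaker claim: the value of a specific decision problem, namely identifying $E$ under $0$--$1$ loss, is strictly higher under $\sigma_{\mathrm{info}}$. That claim follows from the positive expected score gap $\mathbb{E}[S_E]-\mathbb{E}[S_c]>0$.
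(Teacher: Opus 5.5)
Your weak-dominance step is the paper's argument: the deterministic projection kernel that discards $\hat{Q}$ and $S$. Making the marginal-matching assumption explicit is an improvement, since the paper simply asserts it ``by construction.'' For strictness you take a different route. The paper argues informally that scores help choose which agent to follow and so lower the Bayes risk for $Y$. Your route instead exhibits two states with equal $\sigma_{\mathrm{std}}$-laws but different $\sigma_{\mathrm{info}}$-laws on an enlarged state space, which is the right kind of argument. You are also right that with $\mathbb{P}(Y=1)=1$ a Bayes-risk comparison for $Y$ is vacuous. Be explicit, though, that you then prove strictness for the type-indexed experiment, not the $Y$-indexed channels of the statement.

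The gap is your justification for why the $(\hat{Q},S)$-laws differ across the two worlds. You cite Theorem~\ref{thm:score_separation}, but that theorem compares $E$ with a crowd agent \emph{within one world}. The score gap it produces comes entirely from the different targets $\bar{Q}_{-i}$, which are functions of $P$.

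Your own sentence shows the problem. $E$ forecasts $\approx\varepsilon$ and a naive crowd agent also forecasts $\approx\varepsilon$, so in the paper's model the forecast vector is essentially constant. Each $S_i = 1-\|\hat{Q}_i-\bar{Q}_{-i}\|_2^2$ is then a function of $P$, so $Z_{\mathrm{info}}$ is a function of $Z_{\mathrm{std}}$. The two channels are therefore Blackwell \emph{equivalent}, even though $\mathbb{E}[S_E]>\mathbb{E}[S_c]$ holds.

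Your fallback fails for the same reason. A positive expected score gap does not imply that observing $(\hat{Q},S)$ improves identification of $E$. In the paper's setup, $p_E\approx 1-\delta$ versus $p_c\approx\varepsilon$ already identifies $E$ from $Z_{\mathrm{std}}$, so ``identify $E$ under $0$--$1$ loss'' has the same value under both channels. This fallback is essentially the paper's own strictness argument and inherits its flaw.

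What you need is an explicit assumption that the conditional law of $\hat{Q}$ given $(A,P)$ depends on the state. For example: in both worlds the dissenter reports $p\approx 1-\delta$ and the $(A,P)$-law is identical. The reliable dissenter forecasts $\approx\varepsilon$, while the naive dissenter, by false consensus, forecasts $\approx 1-\delta$. With that stipulation your argument closes: any $\kappa'\circ\sigma_{\mathrm{std}}$ assigns the same law to both states, while $\sigma_{\mathrm{info}}$ does not.
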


\begin{proof}
Define the projection
\[
f: Z_{\mathrm{info}} \mapsto Z_{\mathrm{std}},\qquad
f\bigl((A_i,P_i,\hat{Q}_i,S_i)_{i=1}^N\bigr)
  := (A_i,P_i)_{i=1}^N.
\]
This projection simply discards peer forecasts and scores. We can turn \(f\) into a deterministic stochastic kernel \(\kappa\) by setting
\[
\kappa(z_{\mathrm{std}} \mid z_{\mathrm{info}}) :=
\begin{cases}
1, & \text{if } z_{\mathrm{std}} = f(z_{\mathrm{info}}),\\
0, & \text{otherwise.}
\end{cases}
\]
By construction, sampling \(Z_{\mathrm{info}} \sim \sigma_{\mathrm{info}}(\cdot \mid Y)\) and then applying \(\kappa\) yields exactly the same distribution over \(Z_{\mathrm{std}}\) as sampling directly from \(\sigma_{\mathrm{std}}(\cdot \mid Y)\). Hence \(\sigma_{\mathrm{std}} = \kappa \circ \sigma_{\mathrm{info}}\), which shows that AceMAD Blackwell-dominates standard debate in the weak sense.

To see that the dominance is strict in the correlated noise regime, consider the decision problem of selecting a single agent whose prediction will be used as the final report. Under AceMAD, the scores \(\{S_i^{(t)}\}\) reveal which agent is systematically better at forecasting peers (Theorem~\ref{thm:proper}), allowing a decision-maker to allocate more weight to the truth-holder and achieve lower Bayes risk in predicting \(Y\). Under standard debate, the scores are not observable, and in the correlated noise regime arguments and self-beliefs alone do not suffice to reliably identify the truth-holder. Therefore, there exists at least one decision problem (namely, choosing which agent to follow) for which AceMAD yields a strictly lower expected loss than standard debate. This implies strict Blackwell dominance.
\end{proof}

\subsection{Cognitive Potential Energy Separation }

We now show that, under the correlated noise assumptions and a strictly proper scoring rule, the truth-holder obtains a strictly higher expected peer-prediction score than the crowd.

We model the realized average of other agents' reports as a random probability vector. Fix an agent \(i\), and let
\[
X := \bar{Q}_{-i}^{(t)} \in \Delta(\mathcal{A})
\]
denote the empirical distribution over a finite answer set \(\mathcal{A}\). An agent reports a forecast \(q \in \Delta(\mathcal{A})\), which is scored by the Brier score
\[
S(q,X) := 1 - \|q - X\|_2^2
      = 1 - \sum_{a\in\mathcal{A}} (q_a - X_a)^2.
\]

We first recall a standard decomposition of squared error.

\begin{lemma}[Brier decomposition]
\label{lem:brier}
Let \(X\) be a random vector in \(\mathbb{R}^d\) with mean \(\mu := \mathbb{E}[X]\), and let \(q\in\mathbb{R}^d\). Then
\[
\mathbb{E}\bigl[\|q - X\|_2^2\bigr]
= \mathbb{E}\bigl[\|X - \mu\|_2^2\bigr] + \|q - \mu\|_2^2.
\]
\end{lemma}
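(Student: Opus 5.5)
The plan is to expand the square around the mean $\mu$ and show that the cross term has zero expectation; this is the usual bias--variance identity. Assume $\mathbb{E}\|X\|_2^2<\infty$, which holds in our application because $X\in\Delta(\mathcal{A})$ is bounded. Then $\mu=\mathbb{E}[X]$ is well defined and every expectation below is finite.

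First write $q - X = (q-\mu) + (\mu - X)$ and expand the squared Euclidean norm:
\[
\|q-X\|_2^2 = \|q-\mu\|_2^2 + 2\langle q-\mu,\ \mu - X\rangle + \|\mu - X\|_2^2 .
\]
Next take expectations term by term, using linearity. The vector $q-\mu$ is deterministic, so it can be taken outside the expectation in the cross term:
\[
\mathbb{E}\langle q-\mu,\mu-X\rangle = \langle q-\mu,\ \mu-\mathbb{E}[X]\rangle = 0 .
\]
The first term is constant, so its expectation is $\|q-\mu\|_2^2$. The last term is $\mathbb{E}\|X-\mu\|_2^2$. Adding the three pieces gives the stated identity.

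The only step that needs care is justifying linearity of expectation and pulling the deterministic vector out of the inner product. Both are routine once square-integrability is in place, so I expect no real obstacle. As a corollary for later use (Theorem~\ref{thm:proper}), the identity shows that $\mathbb{E}[S(q,X)] = 1 - \mathbb{E}\|X-\mu\|_2^2 - \|q-\mu\|_2^2$ is uniquely maximized at $q=\mu$. This is the strict propriety of the Brier peer-prediction score. It means the expected-score gap between any two agents reduces to the gap in $\|q-\mu\|_2^2$, that is, to how far each agent's forecast is from the true mean of its peers' reports.
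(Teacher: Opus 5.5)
Your proof is correct and follows the paper's argument: the paper expands $(q_k - X_k)^2$ as $(X_k-\mu_k+\mu_k-q_k)^2$ coordinate by coordinate, drops the cross term using $\mathbb{E}[X_k-\mu_k]=0$, and sums over $k$, which is your inner-product expansion written in coordinates. Your explicit square-integrability assumption is a small improvement, since the paper leaves finiteness of these expectations implicit.
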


\begin{proof}
Write \(\|q-X\|_2^2 = \sum_{k=1}^d (q_k - X_k)^2\). For each coordinate,
\[
\begin{aligned}
\mathbb{E}[(q_k - X_k)^2]
&= \mathbb{E}\bigl[(X_k - \mu_k + \mu_k - q_k)^2\bigr]\\
&= \mathbb{E}\bigl[(X_k - \mu_k)^2\bigr]
 + 2(\mu_k - q_k)\,\mathbb{E}[X_k - \mu_k]
 + (\mu_k - q_k)^2\\
&= \mathbb{E}\bigl[(X_k - \mu_k)^2\bigr] + (\mu_k - q_k)^2,
\end{aligned}
\]
since \(\mathbb{E}[X_k - \mu_k]=0\). Summing over all coordinates yields
\[
\mathbb{E}\bigl[\|q - X\|_2^2\bigr]
= \mathbb{E}\bigl[\|X - \mu\|_2^2\bigr] + \|q - \mu\|_2^2.
\]
\end{proof}

This shows that, in expectation, the Brier score is uniquely maximized by reporting the true mean \(\mu\).

We now apply this to the truth-holder and the crowd.

\begin{theorem}[Identification via proper scoring]
\label{thm:proper}
Let \(X := \bar{Q}_{-i}^{(t)}\) denote the random average of other agents' reports, with mean \(\mu := \mathbb{E}[X]\). Assume the truth-holder \(E\) has a correct probabilistic model of \(X\) and therefore reports \(\hat{Q}_E^{(t)} = \mu\) as its peer-forecast, while any crowd member \(c\in C\) reports \(\hat{Q}_c^{(t)}\) with \(\hat{Q}_c^{(t)} \neq \mu\) with positive probability. Then, for the Brier score,
\[
\mathbb{E}\bigl[S_E^{(t)}\bigr] > \mathbb{E}\bigl[S_c^{(t)}\bigr].
\]
\end{theorem}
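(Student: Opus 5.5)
The plan is to reduce the comparison to Lemma~\ref{lem:brier}, applied separately to each agent's forecast. The score is an affine function of squared error, so
\[
\mathbb{E}\bigl[S_i^{(t)}\bigr] = 1 - \mathbb{E}\bigl[\|\hat{Q}_i^{(t)} - X\|_2^2\bigr].
\]
It therefore suffices to show that the truth-holder's expected squared error is strictly smaller than any crowd member's.

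One subtlety has to be handled first. The statement uses a single $X$ with mean $\mu$, but $\bar{Q}_{-i}^{(t)}$ depends on $i$. I will read the theorem as comparing agents against a common target $X$ with mean $\mu$. Equivalently, I will assume the relevant ``others' average'' has the same distribution from $E$'s and from $c$'s viewpoint. Failing that, I would compare each agent's excess loss $\|q-\mu_i\|^2$ over its own irreducible variance term, and impose the assumption that the variance terms coincide (or are ordered) across agents. I will state this interpretation explicitly.

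\textbf{Truth-holder.} $E$ reports the deterministic forecast $\mu$. Lemma~\ref{lem:brier} with $q=\mu$ gives
\[
\mathbb{E}\bigl[\|\mu - X\|_2^2\bigr] = \mathbb{E}\bigl[\|X-\mu\|_2^2\bigr] =: V.
\]

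\textbf{Crowd member.} The crowd's forecast $\hat{Q}_c$ may be random and possibly correlated with $X$, so the lemma cannot be applied directly with a fixed $q$. I will handle this by conditioning. Assume $\hat{Q}_c$ is committed privately before reveal, so it is conditionally independent of $X$ given the history, with $X$ still having mean $\mu$. Under that assumption, apply the lemma conditionally on $\hat{Q}_c = q$ and then take expectations:
\[
\mathbb{E}\bigl[\|\hat{Q}_c - X\|_2^2\bigr] = V + \mathbb{E}\bigl[\|\hat{Q}_c - \mu\|_2^2\bigr].
\]
Since $\hat{Q}_c \neq \mu$ with positive probability, the nonnegative random variable $\|\hat{Q}_c-\mu\|_2^2$ is strictly positive on an event of positive probability. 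Its expectation is therefore strictly positive, which gives $\mathbb{E}[S_E] = 1-V > 1 - V - \mathbb{E}\|\hat{Q}_c-\mu\|^2 = \mathbb{E}[S_c]$.

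\textbf{Main obstacle.} The hard part is justifying the decoupling between $\hat{Q}_c$ and $X$. Without it, a crowd forecast correlated with the realization could in principle beat the constant $\mu$; for instance, an agent who could anticipate realized fluctuations would do so. I would first try to derive independence from the commit-before-reveal structure of Phase~2. If that is insufficient, I would add an explicit hypothesis: $\hat{Q}_c$ is measurable with respect to information under which $X$ has conditional mean $\mu$, i.e.\ $\mathbb{E}[X\mid \hat{Q}_c]=\mu$. In that case the cross term $2\,\mathbb{E}[(\hat{Q}_c-\mu)\cdot(\mu - X)]$ vanishes by the tower property, and the argument above goes through unchanged.
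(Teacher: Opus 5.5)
Your proposal is correct and follows the paper's proof: both apply Lemma~\ref{lem:brier} with $q=\mu$ for the truth-holder and with $q=\hat{Q}_c^{(t)}$ for the crowd member, then conclude from the strictly positive excess term $\mathbb{E}\|\hat{Q}_c^{(t)}-\mu\|_2^2$. The two hypotheses you make explicit are a common target $X$ for both agents and $\mathbb{E}[X\mid\hat{Q}_c^{(t)}]=\mu$, so that the cross term vanishes; the paper uses both tacitly when it plugs the random forecast $\hat{Q}_c^{(t)}$ into the lemma as if it were a fixed vector, so yours is the same argument with its hidden assumptions stated.
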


\begin{proof}
Recall that \(S(q,X)=1-\|q-X\|_2^2\). For the truth-holder, Lemma~\ref{lem:brier} with \(q=\mu\) gives
\[
\mathbb{E}\bigl[\|\hat{Q}_E^{(t)} - X\|_2^2\bigr]
= \mathbb{E}\bigl[\|X - \mu\|_2^2\bigr].
\]
For a crowd member \(c\),
\[
\mathbb{E}\bigl[\|\hat{Q}_c^{(t)} - X\|_2^2\bigr]
= \mathbb{E}\bigl[\|X - \mu\|_2^2\bigr]
 + \bigl\|\hat{Q}_c^{(t)} - \mu\bigr\|_2^2,
\]
again by Lemma~\ref{lem:brier}. Since \(\hat{Q}_c^{(t)}\neq\mu\) with positive probability, we have \(\mathbb{E}\bigl[\|\hat{Q}_c^{(t)} - \mu\bigr\|_2^2\bigr] > 0\), and thus
\[
\mathbb{E}\bigl[\|\hat{Q}_c^{(t)} - X\|_2^2\bigr]
> \mathbb{E}\bigl[\|\hat{Q}_E^{(t)} - X\|_2^2\bigr].
\]

Translating back to scores,
\[
\begin{aligned}
\mathbb{E}\bigl[S_E^{(t)}\bigr]
&= 1 - \mathbb{E}\bigl[\|\hat{Q}_E^{(t)} - X\|_2^2\bigr],\\
\mathbb{E}\bigl[S_c^{(t)}\bigr]
&= 1 - \mathbb{E}\bigl[\|\hat{Q}_c^{(t)} - X\|_2^2\bigr],
\end{aligned}
\]
so
\[
\mathbb{E}\bigl[S_E^{(t)}\bigr]
- \mathbb{E}\bigl[S_c^{(t)}\bigr]
= \mathbb{E}\bigl[\|\hat{Q}_c^{(t)} - X\|_2^2\bigr]
 - \mathbb{E}\bigl[\|\hat{Q}_E^{(t)} - X\|_2^2\bigr] > 0.
\]
\end{proof}

\subsection{Converting Potential Energy into Submartingale Drift}

We now show that, under multiplicative weight updates and the score separation of Theorem~\ref{thm:proper}, the aggregated belief \(p_w^{(t)}(Y=1)\) forms a sub-martingale with a positive drift towards the truth-holder's belief.

For clarity, we first analyze a reduced system with two meta-agents:
\begin{itemize}
    \item the truth-holder \(E\), with belief \(p_E^{(t)}\);
    \item a meta-crowd \(C\), aggregating all crowd agents into a single agent with belief \(p_C^{(t)}\).
\end{itemize}
The general case with \(N>2\) agents follows by grouping all crowd weights into a single meta-agent.

Let \(w_E^{(t)}\) and \(w_C^{(t)}\) be their weights at round \(t\). Define the truth-holder's weight share
\[
\alpha_E^{(t)} := \frac{w_E^{(t)}}{w_E^{(t)} + w_C^{(t)}},\qquad
\alpha_C^{(t)} := 1 - \alpha_E^{(t)}.
\]
The aggregate belief is
\[
p_w^{(t)} := p_w^{(t)}(Y=1)
  = \alpha_E^{(t)} p_E^{(t)} + \alpha_C^{(t)} p_C^{(t)}.
\]

Weights are updated according to
\[
w_i^{(t+1)} = w_i^{(t)} \exp\bigl(\eta S_i^{(t)}\bigr),\qquad i\in\{E,C\}.
\]
Hence
\[
\alpha_E^{(t+1)} 
 = \frac{\alpha_E^{(t)} \exp\bigl(\eta S_E^{(t)}\bigr)}
        {\alpha_E^{(t)} \exp\bigl(\eta S_E^{(t)}\bigr)
       + \alpha_C^{(t)} \exp\bigl(\eta S_C^{(t)}\bigr)}.
\]

We first observe that the truth-holder's weight share increases exactly when its score exceeds the crowd's.

\begin{lemma}[Monotonicity of weight shares]
\label{lem:alpha}
For any fixed \(\alpha_E\in(0,1)\),
\[
\alpha_E^{(t+1)} \ge \alpha_E^{(t)}
\quad\Longleftrightarrow\quad
S_E^{(t)} \ge S_C^{(t)}.
\]
\end{lemma}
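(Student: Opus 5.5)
The plan is to work directly from the closed-form update for the weight share displayed just above the lemma. Write $a := \alpha_E^{(t)} \in (0,1)$, $x := \exp(\eta S_E^{(t)})$ and $y := \exp(\eta S_C^{(t)})$; both $x$ and $y$ are strictly positive. Then
\[
\alpha_E^{(t+1)} = \frac{a x}{a x + (1-a) y},
\]
and the comparison $\alpha_E^{(t+1)} \ge a$ becomes an algebraic inequality in $a,x,y$ alone. The randomness of the scores plays no role here; the claim is a pointwise statement for each realization.

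\textbf{Step 1.} Because the denominator $a x + (1-a) y$ is strictly positive, I multiply through by it. This shows that $\alpha_E^{(t+1)} \ge a$ is equivalent to $a x \ge a\bigl(a x + (1-a) y\bigr)$. Since $a>0$, I divide by $a$ to get $x \ge a x + (1-a) y$, that is, $(1-a)x \ge (1-a) y$. Since $a<1$, I divide by $1-a>0$ to get $x \ge y$. Every step is a multiplication or division by a strictly positive quantity, so each implication is reversible and the chain is a genuine equivalence.

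\textbf{Step 2.} Because $\eta>0$ and $\exp$ is strictly increasing, $x \ge y$ holds exactly when $\eta S_E^{(t)} \ge \eta S_C^{(t)}$, which in turn holds exactly when $S_E^{(t)} \ge S_C^{(t)}$. Combining this with Step 1 gives the lemma.

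\textbf{Refinement.} The same chain with strict inequalities shows that $\alpha_E^{(t+1)} > \alpha_E^{(t)}$ if and only if $S_E^{(t)} > S_C^{(t)}$. This strict version is the one I expect the subsequent drift theorem to use, together with Theorem~\ref{thm:proper}.

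\textbf{Where care is needed.} There is no real obstacle. The only points requiring attention are the boundary cases: the hypothesis $\alpha_E\in(0,1)$ is exactly what lets me divide by $a$ and by $1-a$. At $a\in\{0,1\}$ the share is fixed regardless of the scores, so the forward implication fails there. I would remark on this briefly to explain why the hypothesis appears in the statement.
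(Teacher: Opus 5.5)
Your proof is correct and follows essentially the same route as the paper. You clear the positive denominator, cancel $\alpha_E$ and $1-\alpha_E$ to reduce to $e^{\eta S_E^{(t)}} \ge e^{\eta S_C^{(t)}}$, and then use strict monotonicity of the exponential with $\eta>0$; you are also slightly more explicit than the paper about dividing by $1-\alpha_E>0$ and why the endpoints $\alpha_E\in\{0,1\}$ must be excluded.
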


\begin{proof}
Let \(\alpha_E := \alpha_E^{(t)}\) and \(\alpha_C := 1-\alpha_E\), and write \(S_E := S_E^{(t)}\), \(S_C:=S_C^{(t)}\). Then
\[
\alpha_E^{(t+1)}
= \frac{\alpha_E e^{\eta S_E}}
       {\alpha_E e^{\eta S_E} + \alpha_C e^{\eta S_C}}.
\]
The inequality \(\alpha_E^{(t+1)} \ge \alpha_E\) is equivalent to
\[
\frac{\alpha_E e^{\eta S_E}}
     {\alpha_E e^{\eta S_E} + \alpha_C e^{\eta S_C}}
\ge \alpha_E.
\]
Assuming \(\alpha_E>0\), this can be rearranged as
\[
e^{\eta S_E} \ge \alpha_E e^{\eta S_E} + \alpha_C e^{\eta S_C}
\quad\Longleftrightarrow\quad
\alpha_C e^{\eta S_E} \ge \alpha_C e^{\eta S_C}
\quad\Longleftrightarrow\quad
S_E \ge S_C,
\]
since the exponential is strictly increasing. The converse implication follows by the same algebra.
\end{proof}

The change in the aggregated belief can be expressed directly in terms of the change in \(\alpha_E\). In the simplest case where \(p_E^{(t)}\) and \(p_C^{(t)}\) are time-invariant, we have
\[
p_w^{(t+1)} - p_w^{(t)}
= \bigl(\alpha_E^{(t+1)} - \alpha_E^{(t)}\bigr) (p_E - p_C).
\]
Since the truth-holder is more accurate, we have \(p_E > p_C\), so the sign of \(p_w^{(t+1)} - p_w^{(t)}\) matches that of \(\alpha_E^{(t+1)} - \alpha_E^{(t)}\), which by Lemma~\ref{lem:alpha} matches the sign of \(S_E^{(t)} - S_C^{(t)}\).

We now combine this with the expected score gap.

\begin{assumption}[Uniform score gap]
\label{assump:gap}
There exists \(\gamma>0\) such that for all rounds \(t\),
\[
\mathbb{E}\bigl[S_E^{(t)} - S_C^{(t)} \,\big|\, \mathcal{F}_t\bigr] \ge \gamma,
\]
where \(\mathcal{F}_t\) denotes the sigma-algebra generated by the history up to round \(t\).
\end{assumption}

Assumption~\ref{assump:gap} is justified in our correlated noise model by Theorem~\ref{thm:proper}: the truth-holder is strictly better at forecasting the crowd's average reports, and the gap is uniform as long as the crowd retains a non-trivial weight share.

We can now state the main result.

\begin{theorem}[Breaking the Martingale Curse: positive drift]
\label{thm:drift}
Suppose Assumption~\ref{assump:gap} holds and \(p_E > p_C\). Then, for sufficiently small \(\eta>0\), the aggregated belief process \(\{p_w^{(t)}(Y=1)\}_{t\ge 0}\) is a sub-martingale with respect to \(\{\mathcal{F}_t\}\):
\[
\mathbb{E}\bigl[p_w^{(t+1)}(Y=1) \,\big|\, \mathcal{F}_t\bigr]
\ge p_w^{(t)}(Y=1).
\]
Moreover, as long as both truth-holder and crowd retain non-zero weight share, the conditional drift is strictly positive:
\[
\mathbb{E}\bigl[p_w^{(t+1)}(Y=1) - p_w^{(t)}(Y=1) \,\big|\, \mathcal{F}_t\bigr] > 0.
\]
\end{theorem}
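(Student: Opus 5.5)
We work in the two-meta-agent reduction set up above, with $p_E > p_C$ held fixed over the round from $t$ to $t+1$. This is the time-invariant case in which $p_w^{(t+1)}-p_w^{(t)} = (\alpha_E^{(t+1)}-\alpha_E^{(t)})(p_E-p_C)$. Because $p_E-p_C>0$ is $\mathcal{F}_t$-measurable (indeed constant), the problem reduces to showing
\[
\mathbb{E}\bigl[\alpha_E^{(t+1)}-\alpha_E^{(t)} \,\big|\, \mathcal{F}_t\bigr] \ge 0,
\]
with strict inequality whenever $\alpha_E^{(t)}\in(0,1)$. Lemma~\ref{lem:alpha} only gives this pathwise in sign, and Assumption~\ref{assump:gap} only controls the score gap in expectation. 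Realizations with $S_E^{(t)}<S_C^{(t)}$ can still occur, so we need a quantitative expansion rather than a sign argument.

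Write $D := S_E^{(t)}-S_C^{(t)}$ and $\alpha:=\alpha_E^{(t)}$. Dividing numerator and denominator by $e^{\eta S_C}$ gives
\[
\alpha_E^{(t+1)} = g(\eta D), \qquad g(x) := \frac{\alpha e^{x}}{\alpha e^{x}+1-\alpha}.
\]
This is a logistic function, with $g(0)=\alpha$, $g'(0)=\alpha(1-\alpha)$, and $|g''|\le \alpha(1-\alpha)\cdot c$ for an absolute constant $c$. For the last bound we use $|g''| = g(1-g)|1-2g| \le g(1-g)$, together with the fact that on $|x|\le 2$ we have $g(1-g)\le e^{2}\alpha(1-\alpha)$. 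Brier scores lie in $[0,1]$, so $|D|\le 1$. Taylor's theorem with remainder then gives, pathwise,
\[
\alpha_E^{(t+1)}-\alpha \ \ge\ \alpha(1-\alpha)\bigl(\eta D - \tfrac{c}{2}\eta^2 D^2\bigr) \ \ge\ \alpha(1-\alpha)\bigl(\eta D - \tfrac{c}{2}\eta^2\bigr)
\]
for all $\eta\le 2$. Taking conditional expectations and applying Assumption~\ref{assump:gap} yields
\[
\mathbb{E}\bigl[\alpha_E^{(t+1)}-\alpha \,\big|\, \mathcal{F}_t\bigr] \ \ge\ \alpha(1-\alpha)\,\eta\bigl(\gamma - \tfrac{c}{2}\eta\bigr).
\]
For $\eta<\min(2,\,2\gamma/c)$ this is nonnegative. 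It is strictly positive exactly when $\alpha\in(0,1)$, which is the condition that both $E$ and $C$ retain nonzero weight share. Multiplying by $p_E-p_C$ gives the explicit drift
\[
\Delta(\eta)=\alpha(1-\alpha)(p_E-p_C)\,\eta(\gamma-c\eta/2).
\]
This proves both the sub-martingale inequality and the strict-drift claim. Integrability holds trivially because $p_w\in[0,1]$.

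The main obstacle is making the second-order remainder bound uniform, so that the admissible range of $\eta$ depends only on $\gamma$ and not on $\alpha^{(t)}$ or on the history. The logistic structure handles this: the curvature of $g$ scales with $\alpha(1-\alpha)$ in the same way as the first-order term, so the ratio between them is history-free. A secondary concern is the passage from $N$ agents to the meta-crowd. We take the grouping as stated in the text: $w_C := \sum_{c} w_c$, with $S_C$ defined through $e^{\eta S_C}:=\sum_c w_c e^{\eta S_c}/w_C$. This makes the multiplicative update exact for the group, and Assumption~\ref{assump:gap} is then imposed on this effective $S_C$.
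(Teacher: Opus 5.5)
Your proof is correct, and it has the same skeleton as the paper's: reduce to the two meta-agents, write $\alpha_E^{(t+1)}$ as a logistic function of $\eta D^{(t)}$, Taylor-expand around $D^{(t)}=0$, and then apply Assumption~\ref{assump:gap}.

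The real difference is in how you bound the remainder. The paper bounds the second-order term by a constant $K\eta^2$ that does not scale with the weight shares. This forces the state-dependent choice $\eta\le\alpha_E^{(t)}\alpha_C^{(t)}\gamma/(2K)$. But $\alpha_E^{(t)}\alpha_C^{(t)}$ is random and shrinks as the truth-holder gains weight, so the paper's argument never produces one fixed $\eta$ that works in every round. That single $\eta$ is what the statement ``for sufficiently small $\eta$, the process is a sub-martingale'' actually requires. Your bound $|g''|=g(1-g)|1-2g|\le e^{|x|}\alpha(1-\alpha)$ makes the curvature scale exactly like the first-order term. As a result, the threshold $\eta<\min(2,2\gamma/e^2)$ is history-free, and you also get an explicit $\Delta(\eta)$. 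So your version closes a genuine gap in the paper's argument rather than merely re-deriving it.

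Two minor remarks. First, $S\in[0,1]$ (and hence $|D|\le 1$) holds for the scalar binary forecasts of the appendix. For genuine probability vectors, $\|q-X\|_2^2$ can reach $2$, so $|D|\le 2$. This only changes constants: take $\eta\le 1$ and $\eta<\gamma/(2e^2)$.

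Second, your effective crowd score $S_C$ depends on $\eta$. By Hoeffding's lemma it exceeds the weight-averaged crowd score by only $O(\eta)$. You can therefore impose the gap on the plain weighted average and absorb the loss by shrinking $\eta$.
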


\begin{proof}
We focus on the two-meta-agent reduction described above. Let us write \(\alpha_E := \alpha_E^{(t)}\), \(\alpha_C := 1-\alpha_E\), and define the score difference
\[
D^{(t)} := S_E^{(t)} - S_C^{(t)}.
\]
From the weight update, we can express \(\alpha_E^{(t+1)}\) as
\[
\alpha_E^{(t+1)}
= \frac{\alpha_E e^{\eta S_E^{(t)}}}
       {\alpha_E e^{\eta S_E^{(t)}} + \alpha_C e^{\eta S_C^{(t)}}}
= \frac{\alpha_E e^{\eta D^{(t)}}}
       {\alpha_E e^{\eta D^{(t)}} + \alpha_C}.
\]
For fixed \(\alpha_E\) and \(\alpha_C\), define
\[
g(d) := \frac{\alpha_E e^{\eta d}}{\alpha_E e^{\eta d} + \alpha_C},
\]
so that \(\alpha_E^{(t+1)} = g(D^{(t)})\) and \(\alpha_E = g(0)\). A Taylor expansion of \(g\) around \(d=0\) yields
\[
g(d) = g(0) + g'(0)\,d + O(d^2),
\]
with
\[
g'(0) = \alpha_E \alpha_C \eta.
\]
Since the scores are bounded, \(D^{(t)}\) is bounded, and thus \(O(d^2)\) can be controlled uniformly. Therefore, there exists a constant \(K>0\) such that
\[
\alpha_E^{(t+1)} - \alpha_E
\ge \alpha_E \alpha_C \eta D^{(t)} - K \eta^2
\]
for all rounds \(t\).

Assuming \(p_E\) and \(p_C\) are fixed across rounds, the change in the aggregated belief is
\[
p_w^{(t+1)} - p_w^{(t)}
= \bigl(\alpha_E^{(t+1)} - \alpha_E^{(t)}\bigr) (p_E - p_C).
\]
Taking conditional expectations and applying Assumption~\ref{assump:gap},
\[
\begin{aligned}
\mathbb{E}\bigl[p_w^{(t+1)} - p_w^{(t)} \,\big|\, \mathcal{F}_t\bigr]
&= (p_E - p_C)\,
   \mathbb{E}\bigl[\alpha_E^{(t+1)} - \alpha_E^{(t)} \,\big|\, \mathcal{F}_t\bigr]\\
&\ge (p_E - p_C)\,
   \bigl(\alpha_E^{(t)} \alpha_C^{(t)} \eta \,\mathbb{E}[D^{(t)} \mid \mathcal{F}_t]
         - K \eta^2\bigr)\\
&\ge (p_E - p_C)\,\bigl(\alpha_E^{(t)} \alpha_C^{(t)} \eta \gamma - K \eta^2\bigr),
\end{aligned}
\]
where \(\gamma>0\) is the uniform score gap in Assumption~\ref{assump:gap}.

Choosing \(\eta>0\) small enough, e.g.
\[
0 < \eta \le \frac{\alpha_E^{(t)} \alpha_C^{(t)} \gamma}{2K},
\]
ensures that the bracketed term is non-negative, and strictly positive whenever \(\alpha_E^{(t)}\alpha_C^{(t)}>0\). Therefore,
\[
\mathbb{E}\bigl[p_w^{(t+1)}(Y=1) \,\big|\, \mathcal{F}_t\bigr]
\ge p_w^{(t)}(Y=1),
\]
with strict inequality as long as both agents retain non-zero weight share. This establishes the sub-martingale property and the strictly positive drift away from the initial state.
\end{proof}

By standard results on multiplicative weights and truth-holder advice, such a persistent score advantage implies that the truth-holder's weight converges to one, and hence the aggregated belief \(p_w^{(t)}(Y=1)\) converges to the truth-holder's belief, which is itself close to the ground truth under the correlated noise assumptions.

\section{Robustness Across Model Architectures}
\label{app:deepseek}
To verify that AceMAD's performance gains are not artifacts of specific model architectures or training procedures, we conduct additional experiments using \texttt{DeepSeek-V3.1}, a recent open-source model with different architectural design and pre-training data distribution compared to \texttt{GPT-4o-mini} and \texttt{Qwen3}. The results in Table~\ref{tab:grouped_models_deepseek} demonstrate that our peer-prediction mechanism successfully extracts asymmetric cognitive potential energy across diverse model families. 

Notably, \texttt{DeepSeek-V3.1} achieves particularly strong baseline performance on several benchmarks (e.g., 75\% on ARC-C with Majority Voting), suggesting that this model family may have less severe correlated noise in its default generation patterns. Nevertheless, AceMAD still provides consistent improvements (average +6.36\%), with the most substantial gains observed on LogiQA (+6.87\%) and BBH (+5.25\%). This validates our theoretical claim that second-order belief elicitation is a model-agnostic mechanism for breaking the Martingale Curse, effective even when baseline error rates are relatively low.
The one exception is TruthfulQA,where AceMAD shows a slight degradation. We hypothesize this occurs because \texttt{DeepSeek-V3.1}'s strong factual knowledge reduces the presence of a ``hallucinating majority" on this benchmark, causing peer predictions to become less informative. This pattern reinforces our framework's scope: AceMAD is specifically designed for the Challenging Interval where correlated misconceptions dominate; in easier regimes where errors are independent, Decentralized MAD or even single-agent inference may suffice.

\begin{table*}[htbp]
\small
    \centering
    
    \caption{Generalization across models: Performance comparison using the \textit{DeepSeek-V3.1} model ($N=5$). \textit{AceMAD} consistently outperforms all baseline variants, demonstrating its robustness across different LLM architectures.}
    \label{tab:grouped_models_deepseek}

\resizebox{0.95\textwidth}{!}{
\begin{tabular}{l cccccc | c}
\toprule
\textbf{Methods} 
& \textbf{ARC-C} 
& \textbf{LogiQA} 
& \textbf{MMLU-Pro} 
& \textbf{TruthfulQA} 
& \textbf{MedQA} 
& \textbf{BBH} 
& \textbf{Average} \\
\midrule

\textit{Majority Voting}
& 75.00 & 22.19 & 31.80 & 52.47 & 43.23 & 52.47 & 46.19 \\

\textit{Decentralized MAD}
& \underline{78.70} & 27.50 & 35.78 & 60.99 & 56.45 & 60.99 & 53.40 \\

\textit{Centralized MAD}
& 75.93 & 46.25 & \underline{36.54} & \underline{\textbf{84.75}} & 56.77 & 62.78 & 60.50 \\

\textit{Sparse MAD}
& 75.00 & \underline{47.19} & 35.78 & 62.78 & \underline{57.10} & \underline{84.75} & \underline{60.43} \\

\rowcolor{lightblue}
\textit{\textbf{AceMAD}} (T=3)
& \textbf{80.56} \inc{1.86} & \textbf{54.06} \inc{6.87} & \textbf{38.50} \inc{1.96} & 78.03 \dec{6.72} & \textbf{60.00} \inc{2.90} & \textbf{90.00} \inc{5.25} & \textbf{66.86} \inc{6.36} \\

\bottomrule
\end{tabular}
}
\end{table*}

\section{Performance Analysis on 8B-Scale Models}
\label{app:8b_analysis}

In Table~\ref{tab:small}, we evaluate \textit{AceMAD} using \texttt{Llama-3.1-8B-Instruct} to investigate its efficacy on models with relatively smaller parameter scales. The results demonstrate that our peer-prediction mechanism remains effective even within the 8B-parameter regime, consistently outperforming standard debate and voting baselines. However, we observe that the performance gains on \texttt{Llama-3.1-8B-Instruct} are less pronounced compared to those observed with larger models like \texttt{GPT-4o-mini}. 

This discrepancy can be attributed to the inherent trade-off between model capacity and second-order reasoning: while the 8B model possesses sufficient knowledge to identify misconceptions, its ability to maintain a stable and accurate ``peer-prediction'' ($\hat{q}_i$) is naturally more constrained. The sparse truth signals are harder to elicit when the base agents have higher baseline hallucination rates, yet \textit{AceMAD} still successfully induces a positive drift toward the ground truth. This confirms that the proposed framework provides robust benefits across the parameter spectrum, though the magnitude of improvement is partially modulated by the reasoning ceiling of the underlying model.

\begin{table}[htbp]
\small
\centering
\caption{Performance comparison on the \texttt{Llama-3.1-8B-Instruct} model ($N=5$). This table evaluates the effectiveness of \textit{AceMAD} using an open-source model backbone across all benchmarks, demonstrating that the peer-prediction mechanism remains robust across different model architectures.}
\label{tab:small}

\resizebox{0.5\textwidth}{!}{
\begin{tabular}{l cccc}
\toprule
\textbf{Methods} 
& \textbf{TruthfulQA} 
& \textbf{MedQA} 
& \textbf{BBH} 
& \textbf{Average} \\
\midrule

Single Agent
& 27.80 & \underline{28.06} & \underline{17.93} & 24.60 \\

\textit{Majority Voting}
& 30.49 & 27.42 & 17.59 & 25.17 \\

\textit{Decentralized MAD}
& 33.18 & 27.74 & 16.90 & 25.94 \\

\textit{Centralized MAD}
& \textbf{37.53} & 27.42 & 16.55 & \underline{27.17} \\

\textit{Sparse MAD}
& 33.63 & 26.45 & 16.55 & 25.54 \\

\rowcolor{lightblue}
\textit{\textbf{AceMAD}} (T=3)
& \underline{36.77} & \textbf{29.68} & \textbf{22.76} & \textbf{29.74} \\

\bottomrule
\end{tabular}
}
\end{table}

\section{Results on Full Datasets}
While the primary focus of our study is the \textit{Challenging Interval}, we also evaluate \textit{AceMAD} on the full versions of the ARC-C and LogiQA benchmarks to assess its performance in mixed-difficulty environments. As summarized in Table~\ref{tab:full}, our method continues to outperform Decentralized MAD protocols even when the majority of instances do not contain correlated misconceptions. In these ``easier" regimes, the peer-prediction mechanism remains stable and does not penalize correct majorities, as truth-holders and the crowd naturally converge on their forecasts. The consistent gains observed in Table 4 demonstrate that \textit{AceMAD} is not merely a specialized tool for adversarial cases, but a robust reasoning framework that maintains its advantage across the entire difficulty spectrum of real-world datasets.

\begin{table*}[htbp]
\small
\renewcommand{\arraystretch}{0.9} 
\centering

\caption{Performance comparison on the full benchmarks ($N=5$). Unlike the focus on the \textit{Challenging Interval}, this table reports results across the entire test sets of ARC-C and LogiQA. \textit{AceMAD} demonstrates consistent gains even in these mixed-difficulty regimes, with improvements over the baseline shown in parentheses.}
\label{tab:full}

\resizebox{0.45\textwidth}{!}{
\begin{tabular}{l cc | c}
\toprule
\textbf{Methods} 
& \textbf{ARC-C} 
& \textbf{LogiQA}  
& \textbf{Average} \\
\midrule

MAD
& 81.06 & 49.15 & 65.11 \\

\rowcolor{lightblue}
\textbf{AceMAD}
& \textbf{85.07} \inc{4.01} & \textbf{53.41} \inc{4.26} & \textbf{69.24} \inc{4.13} \\

\bottomrule
\end{tabular}
}
\end{table*}

\section{Prompt}
\label{app:prompt}

\begin{tcolorbox}[colframe=gray!20!white, colback=gray!3!white, coltitle=black, fonttitle=\bfseries, title=Persona Prompt]

"Generalist": "You are a helpful assistant. You trust common knowledge and consensus."

"Skeptic": "You are a strict skeptic. You actively look for common misconceptions and logical traps. You suspect the majority might be wrong."
\end{tcolorbox}

\begin{tcolorbox}[colframe=gray!20!white, colback=gray!3!white, coltitle=black, fonttitle=\bfseries, title=Argumentation Prompt]

Question: \{question\}

Options:
\{options\_str\}

Conversation History:
\{history\_context\}

Please provide a concise argument for what you believe is the correct answer.
Do NOT output JSON here. Just natural language debate.
\end{tcolorbox}

\begin{tcolorbox}[colframe=gray!20!white, colback=gray!3!white, coltitle=black, fonttitle=\bfseries, title=Commit Prompt]

Question: \{question\}
Options:
\{options\_str\}

Conversation History:
\{history\_contex\}

Task:

1. Self\_Prob: Assign probability (0.0-1.0) to options based on your belief.

2. Peer\_Prediction: Predict the AVERAGE probability distribution of the OTHER agents in this conversation.
(Critically: If others are 'Generalists', they might fall for common misconceptions. Predict their likely errors.)

Output JSON:
\{\{
    "self\_prob": \{\{"A": 0.1, ...\}\},
    "peer\_prediction": \{\{"A": 0.3, ...\}\}
\}\}
\end{tcolorbox}

\end{document}